\newif\ifarxiv
 \ifarxiv \documentclass{article} 
\newcommand{\p}[1]{\left( #1 \right)}
\newcommand{\cd}[0]{\cdot}
\newtheorem*{theorem*}{Theorem}
\newtheorem*{lemma*}{Lemma}
\newtheorem*{corollary}{Corollary}
\newtheorem{defin}{Definition}
\newtheorem{example}{Example}
\newcommand{\nitem}[0]{\ensuremath{n}}
\newcommand{\npresent}[0]{\ensuremath{k}}
\newcommand{\Human}[0]{\ensuremath{H}}
\newcommand{\Alg}[0]{\ensuremath{A}}
\newcommand{\algdist}[0]{\ensuremath{\mathcal{D}^a}}
\newcommand{\humdist}[0]{\ensuremath{\mathcal{D}^h}}
\newcommand{\var}[0]{\ensuremath{\sigma^2}}
\else \setcounter{secnumdepth}{2} \fi 
\title{When Are Two Lists Better than One?:\\  Benefits and Harms in Joint Decision-making
\footnote{Non-archival submission: A version of this work previously appeared at AAAI '24 (selected for oral presentation). No published proceedings for AAAI '24 available at time of FORC submission (ArXiv version at \url{https://arxiv.org/abs/2308.11721}).}
}
\author[1]{Kate Donahue\footnote{Work done while at Google.}\thanks{kdonahue@cs.cornell.edu}}
\author[2]{Sreenivas Gollapudi\thanks{sgollapu@google.com}}
\author[2]{Kostas Kollias\thanks{kostaskollias@google.com }}
\affil[1]{Cornell University}
\affil[2]{Google}
\date{}
\title{When are two lists better than one?:  benefits and harms in joint decision-making
\author {
    Kate Donahue\textsuperscript{\rm 1}\footnote{Work done while at Google.},
    Sreenivas Gollapudi\textsuperscript{\rm 2},
    Kostas Kollias\textsuperscript{\rm 2}
}
\affiliations {
    \textsuperscript{\rm 1}Cornell University\\
    \textsuperscript{\rm 2}Google\\
    kdonahue@cs.cornell.edu, sgollapu@google.com, kostaskollias@google.com
}
\fi

\begin{document}

\maketitle

\begin{abstract}
Historically, much of machine learning research has focused on the performance of the algorithm alone, but recently more attention has been focused on optimizing joint human-algorithm performance. Here, we analyze a specific type of human-algorithm collaboration where the algorithm has access to a set of $n$ items, and presents a subset of size $k$ to the human, who selects a final item from among those $k$. This scenario could model content recommendation, route planning, or any type of labeling task. Because both the human and algorithm have imperfect, noisy information about the true ordering of items, the key question is: which value of $k$ maximizes the probability that the best item will be ultimately selected? For $k=1$, performance is optimized by the algorithm acting alone, and for $k=n$ it is optimized by the human acting alone. 
Surprisingly, we show that for multiple noise models, it is optimal to set $k \in [2, n-1]$ - that is, there are strict benefits to collaborating, even when the human and algorithm have equal accuracy separately. We demonstrate this theoretically for the Mallows model and experimentally for the Random Utilities models of noisy permutations. However, we show this pattern is \emph{reversed} when the human is anchored on the algorithm's presented ordering -  the joint system always has strictly worse performance. We extend these results to the case where the human and algorithm differ in their accuracy levels, showing that there always exist regimes where a more accurate agent would strictly benefit from collaborating with a less accurate one, but these regimes are asymmetric between the human and the algorithm's accuracy.

\end{abstract}

\section{Introduction}

Consider the following motivating example: 
\begin{displayquote}
Alice is a doctor trying to classify a medical record with one of $\nitem$ different labels. Based on her professional expertise and relevant medical information she has access to, she is able to make some ranking over which of these labels is most likely to be accurate. However, she is not perfect, and sometimes picks the wrong label. She decides to use a machine learning algorithm as a tool to assist in picking the right label. The algorithm similarly has a goal of maximizing the probability of picking the correct label. However, the algorithm and Alice rely on somewhat different information sources in making their predictions: vast troves of data for the algorithm, and personal conversations with the patient for the human, for example. Because of this, their rankings over the true labels will often differ slightly. The algorithm communicates its knowledge by presenting its top $\npresent$ labels to Alice, who picks her top label among those that are presented.
For what settings and what values of $\npresent$ will Alice and the algorithm working together have a higher chance of picking the right label? 
\end{displayquote} 

While this example is in the medical domain, in human-algorithm collaboration more generally, often the algorithm can provide assistance, but the human makes the final decision. This is the case in other settings as well: a diner trying to find the best restaurant, a driver trying to find the best route, or a teacher trying to find the best pedagogical method. This framework requires a shift in thinking: rather than focus on optimizing the performance of the algorithm alone, the goal is to build an algorithm that maximizes the performance of the human-algorithm system.

If the algorithm were able to tell Alice (the human) exactly which label she should pick ($\npresent=1$), then this problem would simply reduce to that of building a highly accurate machine learning system. However, in the medical prediction setting, it is unrealistic to assume that the algorithm can force Alice to pick a particular label. If the algorithm presented all of the items to Alice ($\npresent = \nitem$), then this would be equivalent to Alice solving the task herself. In the case where $\nitem$ is large, considering each possible label may be infeasible. However, even if Alice could consider all $\nitem$ items herself, we will show that there are often settings where allowing the algorithm to narrow the set of items to $\npresent$ strictly increases the probability of picking the correct item. 

In this paper, we will focus on the role of the noise distributions that govern the human and algorithm and how \emph{independent} these distributions are. In particular, we will be interested in how strongly the human's permutation is affected by the algorithm's prediction, or the strength of \emph{anchoring}. In this paper, we will explore different models of noisy predictions, and give theoretical and empirical results describing when the joint human-algorithm system has a higher chance of picking the best item. 

We begin in Section \ref{sec:relatedwork} by connecting our model to related works. In Section \ref{sec:model} we describe the theoretical model that we will explore and in Section \ref{sec:prelim} we introduce preliminary technical tools and results that we will need in later sections. Section \ref{sec:noanch} considers the case where the human and the algorithm have independent ordering (lack of anchoring), gives theoretical proofs for conditions where there are strict \emph{benefits} to using a joint human-algorithm system. Specifically, Section \ref{sec:equalacc} shows that strict benefits are guaranteed when both the algorithm and human have equal accuracy rates and the algorithm presents exactly 2 items to the human. Section \ref{sec:diffacc} explores the case where the human and algorithm can differ in their accuracy rates and there are exactly 3 items, of which 2 are presented to the human, showing that there is always a regime where a more accurate player can strictly improve their accuracy by joining with a less accurate partner, but this regime is asymmetric between the human and the algorithm: the human's accuracy rate is more impactful. Next, Section \ref{sec:anch} considers the case where the human may have cognitive biases that cause them to \emph{anchor} on the algorithm's ordering of items: we show that such anchoring makes the joint human-algorithm system have much worse performance. Section \ref{sec:rum} shows numerically that our theoretical results in the previous sections extend to a separate model of permutation distributions, suggesting that our results may have broader implications. Finally, Section \ref{sec:conclude} concludes and discusses avenues for future work. 

\section{Related work}\label{sec:relatedwork}

Studying human-algorithm collaboration is a large, rapidly-growing, and highly interdisciplinary area of research. 
Some veins of research are more ethnographic, studying how people use algorithmic input in their decision-making \cite{lebovitz2021ai, lebovitz2020incorporate, beede2020human, yang2018investigating, okolo2021cannot}. Other avenues work on developing ML tools designed to work with humans, such as in medical settings \cite{raghu2019algorithmic} or child welfare phone screenings \cite{chouldechova2018case}. Finally, and most closely related to this paper, some works develop theoretical models to analyze human-algorithm systems, such as \cite{rastogi2022unifying, cowgill2020algorithmic, bansal2021most, steyvers2022bayesian, madras2017predict}. \citet{bansal2021does} proposes the notion of \emph{complementarity}, which is achieved when a human-algorithm system together has performance that is strictly better than either the human or the algorithm could achieve along. \cite{steyvers2022bayesian} uses a Bayesian framework to model human-algorithmic complementarity, while \cite{donahue2022human} studies the interaction between complementarity and fairness in joint human-algorithm decision systems, and \cite{rastogi2022unifying} provides a taxonomy of how humans and algorithms might collaborate. \cite{kleinberg2021algorithmic} is structurally similar to ours in that it uses the Mallows model and RUM model to give theoretical guarantees for performance related to rankings of items. However, its setting is human-algorithm \emph{competition} rather than \emph{cooperation}, where the question is whether it is better to rely on an algorithmic tool or more noisy humans to rank job candidates.

One related area of research is \enquote{conformal prediction} where the goal is to optimize the subset that the algorithm presents to the human, such as in \cite{straitouri2022provably, wang2022improving, angelopoulos2020uncertainty, vovk2005algorithmic, babbar2022utility, straitouri2023designing}. This formulation is structurally similar to ours, but often takes a different approach (e.g. optimizing the subset given some prediction of how the human will pick among them). Another related area is \enquote{learning to defer}, where an algorithmic tool learns whether to allow a human (out of potentially multiple different humans) to make the final decision, or to make the prediction itself (e.g. \cite{hemmer2022forming, madras2017predict, raghu2019direct}). Finally, a third related area is multi-stage screening or pipelines, where each stage narrows down the set of items further (e.g. \cite{blum2022multi, wang2023uncertainty, dwork2020individual, bower2022random}). \cite{NEURIPS2021_162d1815} specifically studies the case with multiple imperfect nominators who each suggest an action to a ranker, who picks among them (and explores how to optimize this setting). 

Some papers study how humans rely on algorithmic predictions - for example, \cite{de2020case} empirically studies a real-life setting where the algorithm occasionally provided incorrect predictions and explores how the human decision-maker is able to overrule its predictions, while \cite{benz2023humanaligned} studies under what circumstances providing confidence scores helps humans to more accurately decide when to rely on algorithmic predictions. \cite{Bryce_Spiess} studies a case where the human decision-maker views the algorithm's recommendation as the \enquote{default} - similar to our \enquote{anchoring} setting, while \cite{vasconcelos2023explanations} studies how explanations can reduce the impact of anchoring, and  \cite{whogoesfirst} empirically studies the impact of anchoring in a medical setting. \cite{rambachan2021identifying} studies how to identify human errors in labels from observational data, while \cite{alur2023auditing} explores how an algorithmic system can detect when a human actor has access to different sources of information than the algorithm itself. Also in a medical setting, \cite{cabitza2021studying} studies how \enquote{interaction protocols} with doctors and algorithmic tools can affect overall accuracy. \cite{chen2023understanding} empirically explores how human rely on their intuition along with algorithmic explanations in making decisions. \cite{mozannar2023show} explores a setting where an LLM is making recommendations of code snippets to programmers, with the goal of making recommendations that are likely to be accepted. Related to complementarity, \cite{guszcza2022hybrid} describes the principles of  \enquote{hybrid intelligence} necessary for optimizing human-algorithm collaboration. 

There has also been a series of work looking more specifically at human-algorithm collaboration in bandit settings. \citet{gao2021human} learns from batched historical human data to develop an algorithm that assigns each task at test time to either itself or a human. \citet{chan2019assistive} studies a setting where the human is simultaneously learning which option is best for them. However, their framework allows the algorithm to overrule the human, which makes sense in many settings, but is not reasonable in some settings like as our motivating medical example. \citet{bordt2022bandit} formalizes the problem as a two-player setting where both the human and algorithm take actions that affect the reward both experience. \cite{Agarwal2022DiversifiedRF} and \cite{agarwal2023online} study the case where a \enquote{menu} of $\npresent$ arms out of $\nitem$ are presented to the human, who selects a final one based on a preference model. This setting differs from ours in the model of human preferences over items, as well as the goal of optimizing for the algorithm's overall regret. \cite{yao2023bad} studies a related setting where multiple content creators each recommend a top $k$ set of items to humans, who pick among those $k$ according to a RUM - key differences are that content creators are competing with each other and also learning their own utility functions over time. \cite{tian2023towards} considers the case where the human's mental model of the algorithm is changing over time, and models this as a dynamical system. 

Additionally, some work has used the framework of the human as the final decision-maker and studied how to disclose information so as to incentivize them to take the \enquote{right} action. \citet{immorlica2018incentivizing} studies how to match the best regret in a setting where myopic humans pull the final arm. \citet{hu2022incentivizing} studies a related problem with combinatorial bandits, where the goal is to select a subset of the total arms to pull. \citet{bastani2022learning} investigates a more applied setting where each human is a potential customer who will become disengaged and leave if they are suggested products (arms) that are a sufficiently poor fit. \citet{kannan2017fairness} looks at a similar model of sellers considering sequential clients, specifically investigating questions of fairness. In general, these works differ from ours in that they assume a new human arrives at each time step, and so the algorithm is able to selectively disclose information to them.

\section{Models and notation}\label{sec:model}
In this section, we formalize our theoretical model of how the human and algorithm interact, introduce key notation and assumptions, and describe permutation models that later sections will rely on. 
\subsection{Human-algorithm collaboration model}
We assume that there are $\nitem$ items $\{x_1, \ldots x_{\nitem}\}$, and that the goal is to pick item $x_1$. Each item could represent labels for categorical prediction, news articles of varying relevance, or driving directions with variable levels of traffic, for example. There are two actors: the first ($\Alg$) narrows the items from $\nitem$ total items to a top $\npresent<\nitem$ items which are presented to the second actor ($\Human$), which picks a single item among them. One consistent assumption we will make is that the second actor $\Human$ is not able to directly access or choose from the full set of items: this could be, for example, because $\npresent<< \nitem$ and $\Human$ is bandwidth-limited in how many items it can consider. This model is quite broad: the two actors could be interacting recommendation algorithms, for example, or sequential levels of decision-making among human committees. However, the motivating example we will focus on in this paper is when the first actor $\Alg$ is an algorithm and the second actor $\Human$ is a human. This setting naturally fits with the assumption that $\Human$ is bandwidth-limited, and also motivates the assumption that $\Alg$ and $\Human$ have differing orders for the items, drawn from potentially differing sources of knowledge, but are unable to directly communicate that knowledge to each other. This formulation also allows us to connect with the extensive literature on human-algorithm collaboration, which we discuss further in Section \ref{sec:relatedwork}. Throughout, we will be interested in when we can prove that complementarity (as defined in \cite{bansal2021does}) occurs: in our setting, that the probability of picking the best item $x_1$ is strictly greater in the combined setting than with either the human or algorithm alone.

We will use $\pi^a, \pi^h$ (and sometimes $\rho^a, \rho^h$) to denote the orderings of the algorithm and human over the $\nitem$ items, with $\pi_i^a=x_j$ meaning that the algorithm ranks item $x_j$ in the $i$th place. We will use $\pi_{[\npresent]}^a$ to denote the $\npresent$ items that the algorithm ranks first (and thus presents to the human) and $\pi_{-[\nitem-\npresent]}^a$ to denote the $\nitem-\npresent$ items that the algorithm ranks last (and fails to present to the human). Both $\pi^a, \pi^h$ are random variables drawn from distributions $\pi^a\sim \algdist, \pi^h \sim \humdist$. We will often refer to the joint human-algorithm system as the \emph{combined system}. 

The distributions $\algdist, \humdist$ may be independent: this could reflect the case where both the human and algorithm come up with orderings separately, and then the algorithm presents a set of items for the human to pick between, where the human picks the best item according to their previously-determined ranking. We refer to the case of independent orderings as the \emph{unanchored} case. Alternatively, the distributions $\algdist, \humdist$ may be correlated. In particular, we will compare the \emph{unanchored} case with that of \emph{anchored} ordering. In this setting, the algorithm draws an ordering $\pi^a \sim \algdist$, which then influences the human's permutation -- we will describe what this means technically for different noise models in the next section. This models settings where the algorithm presents a \emph{ordering} of items to the human, rather than a set, which biases the human to varying degrees.

\subsection{Assumptions}
One key assumption is the structure of the human-algorithm system: namely, the algorithm selects $\npresent$ items from which the human picks a final element. As mentioned previously, this could reflect settings the algorithm \emph{must} narrow the set: where the total set of items $\nitem$ is too large for the human to fully explore (e.g. the set of news articles, or the set of possible routes between two destinations). It could also reflect cases where the algorithm \emph{chooses} to narrow the set in order to express its knowledge. This structure also allows the human a structured, but wide range of flexibility. 

However, there are other potential models that could also be feasible. For example, it could be the case that both the human and algorithm present their permutations $\pi^a, \pi^h$, and the combined ranking $\pi^c$ is constructed by \enquote{voting} between each of the rankings, potentially with some uneven weighting between $\pi^a, \pi^h$ based on expertise. Another model could involve the human going down the algorithm's ranking $\pi^a$, stopping whenever it reaches its best item $\pi^h_1$ or with some probability $p$ after inspecting each item. A third model could involve iterative processes where the human and algorithm can refine their rankings through shared information. Note that many of these models would require the human to consider more than $\npresent$ items, which contradicts this model's consideration of a bandwidth-limited agent $\Human$. 

While all of these models could be interesting extensions to explore, in general they are more complicated than ours. Despite the relatively simple and natural structure of our human-algorithm system, we will show that it admits a rich structure with relatively clean results. 

\subsection{Noise models}\label{sec:noisemodels}
In this section, we introduce the noise models we will use for $\algdist, \humdist$, which governs how the algorithm and human respectively arrive at noisy permutations over each of the $\nitem$ items. Both of these noise models are standard in the literature, which is what prompted us to consider them in our paper. We will focus primarily on the Mallows model because of its theoretical tractability, but in Section \ref{sec:rum} we will demonstrate that our core phenomena extend numerically to the Random Utility Model. 
\subsubsection{Mallows model}
 The first is the Mallows model, which has been used extensively as a model of permutations \cite{mallows1957non}. The model has two components: a central ordering $\pi^*$ (here, assumed to be the \enquote{correct} ordering $\{x_1, x_2, \ldots x_{\nitem}\}$) and an accuracy parameter $\phi>0$, where higher $\phi$ means that the distribution more frequently returns orderings that are close to the central ordering $\pi^*$. The probability of any permutation $\pi$ occurring is given by
$\frac{1}{Z} \cd \exp\p{-\phi \cd d(\pi^*, \pi)}$
where $Z$ is a normalizing constant $\sum_{\pi' \in P} \exp\p{-\phi \cd d(\pi^*, \pi)}$ involving a sum over the set all permutations $P$ and $d(\pi^*, \pi)$ is a distance metric between permutations. In this work, we will use Kendall-Tau distance, which is standard. In particular, the Kendall-Tau distance is equivalent to the number of \emph{inversions} in $\pi$. An inversion occurs when element $x_i$ is ranked above $x_j$ in the true ordering $\pi^*$, but is ranked below $x_j$ in $\pi$. This can be roughly thought of as the number of \enquote{pairwise errors} $\pi$ makes in ordering each of the elements. 

In this paper, we will model \emph{anchoring} in the Mallows model through a parameter $w_a \in [0, 1]$ which reflects how strongly the human is influenced by the algorithm's realized permutation $\pi^a$. We model the distance of a particular permutation $\pi$ as given by the weighted average of the distance to the human's central distribution $\pi^*$ and the algorithm's realized permutation $\pi^a$: 
$$d(\pi^*, \pi^a, \pi, w_a) = (1-w_a) \cd d(\pi^*,\pi) + w_a \cd d(\pi^a, \pi)$$
For $w_a =0$, we recover the \emph{unanchored} case where the human draws their permutation from a Mallows distribution centered at the correct ordering $\mathcal{D}\p{\pi^* = \{x_1, x_2, \ldots x_{\nitem}\}}$. For $w_a=1$, the human takes the algorithm's presented ordering as the \enquote{true} ordering and draws permutations centered on it. 

\subsubsection{Random Utility model}

The Random Utility Model (RUM) has similarly been extensively used as a model of permutations \cite{thurstone1927law}. In this model, item $i$ has some true value $\mu_i$, where we assume $\mu_i$ is descending in $i$. The human and algorithm only have access to noisy estimates of these values, $\hat X_i^a \sim \mathcal{D}(\mu_i, \var)$ for some distribution $\mathcal{D}$ with variance $\var$ (often assumed to be Gaussian, which we will use in this paper). These noisy estimates are then used to produce an order $\pi^a, \pi^h$ in descending order of the values $\{\hat X_i^a\}, \{\hat X_i^h\}$. 

In RUM, we model anchoring through shifting the mean of element $i$ depending on the index $j$ that the algorithm ranked it in: 
$$\hat X_i^h \sim \mathcal{D}((1-w_a) \cd \mu_i + w_a \cd \mu_j, \var_h)$$ 
where $w_a$ is a weight parameter indicating how much the algorithm's ordering anchors the human's permutation, and $j$ is the index of item $i$ in the algorithm's permutation $\pi^a$. 

\section{Preliminary tools: mapping between good and bad events}\label{sec:prelim}
First, this section describes preliminary tools we will need in order to prove the results in later sections. Note that every result in this subsection holds for all distributions of human and algorithmic permutations $\humdist, \algdist$, and regardless of the level of anchoring. However, we will find these tools useful for analysis in later subsections with more specific assumptions on $\humdist, \algdist$. Throughout, our goal will be \emph{complementarity} as defined in \cite{bansal2021does}: when the joint system has a higher chance of picking the best item than either the human or algorithm alone. 

First, Definitions \ref{def:goodevent} defines \enquote{good events} where the joint human-algorithm system picks the best arm, where the algorithm alone would not have, and Definition \ref{def:badevent} defines\enquote{bad events}, where the joint system fails to pick the best arm, where the algorithm alone would have. Complementarity occurs whenever the total probability of \enquote{good events} is greater than the total probability of \enquote{bad events}. Note that these could be identically defined with respect to when the human would have picked the best arm. However, defining events relative to the algorithm will make later proofs technically simpler. 

\begin{defin}\label{def:goodevent}
A \enquote{good event} is a pair of permutations $\rho^a,\rho^h$ where the joint human-algorithm system selects the best arm $x_1$ when the algorithm alone would not have picked it. The \enquote{good event} occurs when in one of two cases holds: 
\begin{enumerate}
    \item The algorithm does not rank $x_1$ first but includes it in the $\npresent$ items it presents, while the human ranks item $x_1$ first ($\rho_1^a \not = x_1, x_1 \in \rho_{[\npresent]}^a, \rho_1^h = x_1$)
    \item Identical to case 1, but instead the human ranks $x_1$ in position $m\geq 2$, and the algorithm removes all of the items the human had ranked before it ($\rho^a_1 \ne x_1, x_1 \in \rho_{[\npresent]}^a, \rho_m^h = x_1, \rho_{[m-1]}^h \subseteq \rho_{-[\nitem-\npresent]}^a$)
\end{enumerate} 
\end{defin}

\begin{defin}\label{def:badevent}

A \enquote{bad event} is a pair of permutations $\pi^a, \pi^h$ where the joint human-algorithm system \emph{fails} to pick the best arm, where the algorithm alone would have picked it. 

A \enquote{bad event} occurs when the algorithm ranks $x_1$ first, but the human does not ($\pi_1^a = x_1, \pi_1^h \not = x_1$) and it is \emph{not} the case that the human ranks $x_1$ in position $m$, and the algorithm removes all of the items the human had ranked before it (\emph{not} that $\pi_1^a \in \pi_{\npresent}^a, \pi_m^h = x_1,\pi_{[m-1]}^h \subseteq \pi_{-[\nitem-\npresent]}^a$). 
\end{defin}

Lemma \ref{lem:bijective} states that there exists a bijective mapping between \enquote{good events} and \enquote{bad events} - that is, for every \enquote{good event} there is a unique corresponding \enquote{bad event}. As an immediate corollary, we see that there must be equal numbers of good and bad events. This result by itself is somewhat surprising, and highlights the importance of the probability distributions $\algdist, \humdist$: given a uniform distribution over permutations, the good events and bad events are equally likely, so any complementarity must be driven by certain permutations being more likely than others.

\begin{restatable}{lemma}{bijective}
\label{lem:bijective} 
For any human algorithm system with $\npresent < \nitem$, there is a bijective mapping between \enquote{good events} and \enquote{bad events}. 
\end{restatable}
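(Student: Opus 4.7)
The plan is to construct an explicit bijection $\phi$ that, given a good event $(\rho^a, \rho^h)$, distinguishes the item $y = \rho^a_1$ (which satisfies $y \ne x_1$ and $y \in \rho^a_{[\npresent]}$ by the good-event definition) and transposes $x_1$ with $y$ in both permutations. The inverse $\phi^{-1}$ takes a bad event $(\pi^a, \pi^h)$, reads off $y$ as the item the joint system actually selects (the unique item of $\pi^a_{[\npresent]}$ with smallest position in $\pi^h$), and again transposes $x_1$ with $y$ in both permutations. Each map is well-defined because the distinguished $y$ is uniquely determined on each side.

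First I would verify that $\phi$ sends any good event to a bad event. The swap in the algorithmic permutation makes $\pi^a_1 = x_1$ and leaves $\pi^a_{[\npresent]}$ unchanged as a set, since $y$ sat at position $1$ and $x_1$ sat at some position $i \in [2,\npresent]$ of $\rho^a$. In the good event, the joint system selects $x_1$ over the also-presented $y$, which forces $y$ to sit strictly below $x_1$ in $\rho^h$. After swapping $x_1$ with $y$ in $\rho^h$, the item of $\pi^a_{[\npresent]}$ with smallest $\pi^h$-position becomes $y$, so the joint system now picks $y \ne x_1$. The remaining bad-event condition $\pi^h_1 \ne x_1$ follows by splitting on Definition~\ref{def:goodevent}: in Case~1 the swap places $y$ at position $1$ of $\pi^h$, and in Case~2 position $1$ of $\rho^h$ is already occupied by some non-$x_1$ item (an element of $\rho^a_{-[\nitem-\npresent]}$) that is left untouched by the transposition.

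A symmetric argument shows $\phi^{-1}$ sends bad events to good events. The key identity is that, in a bad event with $y$ at $\pi^h$-position $m$, the filtering condition $\pi^h_{[m-1]} \cap \pi^a_{[\npresent]} = \emptyset$ translates after the swap into $\rho^h_{[m-1]} \subseteq \rho^a_{-[\nitem-\npresent]}$, because the transposition affects only positions $m$ and $m' > m$ in the human list and only positions within $[\npresent]$ in the algorithmic list. The subcase $m = 1$ lands in Case~1 of Definition~\ref{def:goodevent} and $m \ge 2$ in Case~2. Finally, $\phi^{-1}\circ\phi = \mathrm{id}$ reduces to the observation already made above that in the image of $\phi$ the joint system picks precisely the original $\rho^a_1$, so the second application of the transposition reverses the first; the opposite composition is handled identically.

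The main obstacle I expect is the bookkeeping in Case~2: four positions must be tracked (those of $x_1$ and $y$ in each of $\rho^a$ and $\rho^h$), and the subset condition characterizing the filtering must be shown to be preserved in both directions across the swap. Everything else is essentially formal, since the bijection is built from a pair of transpositions whose distinguished element $y$ is recoverable from either endpoint of the map.
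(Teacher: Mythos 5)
Your proposal is correct and takes essentially the same approach as the paper: the same transposition of $x_1$ with the algorithm's top-ranked item $\rho^a_1$ as the forward map, and the same recovery of the swapped item on the bad-event side as the item the joint system actually selects, with the same $m=1$ versus $m\geq 2$ case split. The only difference is presentational (you track the selected item directly where the paper argues by contradiction), so the bijection and verification are the ones in Appendix A.
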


\begin{corollary}
There are equal numbers of \enquote{good events} and \enquote{bad events}. 
\end{corollary}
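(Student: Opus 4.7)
The plan is to construct an explicit bijection $\Phi$ by transposing $x_1$ with a single carefully chosen item in both permutations simultaneously. The key observation I would lead with is that the two cases of Definition \ref{def:goodevent} admit a unified description: a good event is precisely a pair $(\rho^a, \rho^h)$ with $\rho^a_1 \neq x_1$, $x_1 \in \rho^a_{[\npresent]}$, and $x_1$ being the first item along $\rho^h$ that lies in $\rho^a_{[\npresent]}$ (Case 1 is the subcase $m=1$ with an empty containment condition). Dually, a bad event is a pair $(\pi^a, \pi^h)$ with $\pi^a_1 = x_1$ for which the first item along $\pi^h$ lying in $\pi^a_{[\npresent]}$ is some $c \neq x_1$.

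Given a good event $(\rho^a, \rho^h)$, let $y := \rho^a_1$ and define $\Phi(\rho^a, \rho^h) = (\pi^a, \pi^h)$ by transposing $x_1$ and $y$ in each permutation separately. Because $x_1, y \in \rho^a_{[\npresent]}$, the set $\pi^a_{[\npresent]}$ equals $\rho^a_{[\npresent]}$ and $\pi^a_1 = x_1$. Next I would verify $\Phi$ outputs a bad event by tracking $y$ inside $\rho^h$: letting $m$ be $x_1$'s position in $\rho^h$, the hypothesis $\rho^h_{[m-1]} \subseteq \rho^a_{-[\nitem-\npresent]}$ combined with $y \in \rho^a_{[\npresent]}$ forces $y$ to sit at some position $q > m$. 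After the transposition, $\pi^h$ agrees with $\rho^h$ on positions $1,\ldots,m-1$, has $y$ at position $m$, and $x_1$ at position $q$. Thus $y$ is the first item of $\pi^h$ lying in $\pi^a_{[\npresent]}$, so the joint system selects $y \neq x_1$ and $(\pi^a, \pi^h)$ is a bad event.

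For the reverse direction, define $\Psi$ on bad events by the symmetric rule: transpose $x_1$ with the joint pick $c$ in both $\pi^a$ and $\pi^h$. A parallel argument shows $\Psi(\pi^a,\pi^h)$ is a good event (with $\rho^a_1 = c$). The two maps are mutual inverses because the joint pick of $\Phi(\rho^a, \rho^h)$ is exactly $y = \rho^a_1$ by the analysis above, so $\Psi \circ \Phi$ transposes the same pair $\{x_1, y\}$ twice and recovers $(\rho^a, \rho^h)$; symmetrically, $\Phi \circ \Psi$ transposes $\{x_1, c\}$ twice. The corollary then follows immediately, since a bijection between finite sets implies equal cardinalities.

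The main obstacle is not the high-level idea but the careful bookkeeping in the Case 2 portion of the good event: one must verify that after the swap, $y$ lands exactly in $x_1$'s former slot while every item preceding $x_1$ in $\rho^h$ remains fixed. The containment $\rho^h_{[m-1]} \subseteq \rho^a_{-[\nitem-\npresent]}$ together with $y \in \rho^a_{[\npresent]}$ is precisely the hypothesis that forbids $y$ from appearing in $\rho^h_{[m-1]}$; this is the delicate fact that both makes $\Phi$ output a bad event and makes the ``swap partner'' ($\rho^a_1$ on one side, the joint pick on the other) correspond under the involution.
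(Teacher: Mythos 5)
Your proposal is correct and is essentially the paper's own argument: the map $\Phi$ is exactly the paper's ``best-item mapping'' (transposing $x_1$ with $\rho^a_1$ in both permutations), and your inverse $\Psi$ unifies the paper's two-case construction of $G$ by identifying the swap partner as the joint system's pick. Your single-condition reformulation of good and bad events (``$x_1$ is, respectively is not, the first item along the human's order lying in the presented set'') is a clean packaging of the same case analysis, but the underlying bijection and verification are the same.
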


While the full proof of Lemma \ref{lem:bijective} is deferred to Appendix \ref{app:proofs}, the relevant bijective mapping will be useful for later analysis. We define it as \enquote{best-item-mapping}, a function mapping from \enquote{good events} to \enquote{bad events} by swapping the indices of the best item $x_1$ and whichever item $x_j$ that the algorithm had ranked first instead of $x_1$.   

\begin{defin}[Best-item mapping]\label{def:bestmap}
Take any pair of permutations $\rho^a, \rho^h$ such that 
$$\rho^a_1 = x_j \quad \rho^a_i = x_1 \quad \rho^h_{m} = x_1 \quad \rho^h_{\ell}= x_j$$
for $x_j \ne x_1$. Then, we construct the new permutations $\pi^a, \pi^h$ by flipping the location of items $x_1, x_j$, keeping all other items in the same location: 
$$\pi^a_1 = x_1 \quad \pi^a_i = x_j \quad \pi^h_{m} = x_j \quad \pi^h_{\ell}= x_1$$
\end{defin}
To illustrate this mapping process, Example \ref{ex:bestmap} walks through a worked example. 

\begin{example}\label{ex:bestmap}
Consider the case where $\nitem=3, \npresent=2$, for items $x_1, x_2, x_3$ (in descending order of value). An example of a \enquote{good event} is given by the following pairs of permutations: 
$$\rho^a = [x_3, x_1, x_2] \quad \rho^h = [x_2, x_1, x_3]$$
This can be observed directly: for $\npresent=2$ the algorithm will present items $x_3, x_1$, and between these two items the human ranks item $x_1$ first, so it will ultimately pick this (best) item, while the algorithm acting by itself would not have picked this item. 

Applying the best-item mapping in Definition \ref{def:bestmap} goes as follows: the algorithm ranks item $x_3$ first, so we define $x_j = x_3$. To obtain the corresponding \enquote{bad event}, we swap the location of items $x_1, x_3$ for both the algorithm and human permutations, to obtain: 
$$\pi^a = [x_1, x_3, x_2] \quad \pi^h = [x_2, x_3, x_1]$$
We can again verify that this is a valid \enquote{bad event}: the algorithm presents items $x_1, x_3$ to the human, but the human would ultimately pick item $x_3$. However, if the algorithm were acting by itself, it would have picked the best item $x_1$. 
\end{example}

\section{Complementarity without anchoring}\label{sec:noanch}
The preliminary results for \enquote{good events} and \enquote{bad events} in the Section \ref{sec:prelim} are simply mapping between events: they hold for all probability distributions $\algdist, \humdist$. In this and the next section, we will focus on the Mallows model and give conditions such that the joint system will perform strictly worse or better than human or algorithm alone (achieving complementarity \cite{bansal2021does}). In this section, we will consider the case where the human and the algorithm have independent permutations (unanchored). Section \ref{sec:equalacc} begins by considering the case where the human and the algorithm have equal accuracy rates when acting by themselves, showing that complementarity can be guaranteed whenever exactly $\nitem=2$ items are presented. Next, Section \ref{sec:diffacc} will extend the analysis to when the accuracy rates between the human and algorithm differ, showing that the humans accuracy is more impactful, and thus the regions of complementarity are asymmetric.

\subsection{Equal accuracy in human and algorithm}\label{sec:equalacc}

First, in this section, we will give specific conditions for when complementarity is achievable in the \emph{unanchored} setting: specifically, whenever the human and algorithm have equal accuracy rates $\phi^a = \phi^h$ and the algorithm presents $\npresent=2$ items. We consider this setting particularly important because it is extremely achievable: even if the human is very bandwidth limited, it is extremely reasonable to assume that they are able to consider a finalist set of 2 items to pick between. 
\begin{restatable}{theorem}{unanchgood}
\label{thrm:unanchgood}
In the unanchored setting with permutations governed by the Mallows model, the probability of picking the best arm strictly \emph{increases} in the joint human-algorithm system when exactly 2 items are presented ($\npresent = 2$) and accuracy is identical between human and algorithm ($\phi^a = \phi^h$).  
\end{restatable}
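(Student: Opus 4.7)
The plan is to leverage the bijective best-item mapping from Definition \ref{def:bestmap} (which underlies Lemma \ref{lem:bijective}) and show that under the Mallows model with $\phi^a = \phi^h$, each good event has probability at least that of its paired bad event, with strict inequality for at least one pair. Summing over the bijection then gives $P(\text{good events}) > P(\text{bad events})$, i.e., the joint system strictly beats the algorithm alone. Because $\phi^a = \phi^h$ makes $\algdist$ and $\humdist$ identical distributions, the algorithm-alone and human-alone accuracies coincide, so joint $>$ algorithm alone likewise implies joint $>$ human alone.

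First I would simplify the event characterization for $k=2$. The conditions $\rho^a_1 \ne x_1$ and $x_1 \in \rho^a_{[2]}$ force $\rho^a = [x_j, x_1, \ldots]$ for some $j \ne 1$, and unifying cases 1 and 2 of Definition \ref{def:goodevent} reduces the good-event condition to: $x_1$ precedes $x_j$ in $\rho^h$. The best-item mapping then produces $\pi^a = [x_1, x_j, \ldots]$ (tail unchanged) and $\pi^h$ obtained from $\rho^h$ by swapping the positions of $x_1$ and $x_j$.

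The crux is the Kendall-Tau computation. On the algorithm side, the swap only affects the pair $(x_1, x_j)$, giving $d(\pi^*, \rho^a) - d(\pi^*, \pi^a) = 1$. On the human side, let $m' < \ell'$ be the positions of $x_1, x_j$ in $\rho^h$. Swapping them flips the pair $(x_1, x_j)$ and, for each intermediate element $x_c$ at a position $p \in (m', \ell')$, both pairs $(x_c, x_1)$ and $(x_c, x_j)$. A careful sign count shows that each intermediate $x_c$ with $c < j$ contributes $+2$ to the inversion difference while each with $c > j$ contributes $0$, yielding $d(\pi^*, \pi^h) - d(\pi^*, \rho^h) = 1 + 2A$, where $A$ is the number of intermediate elements with index smaller than $j$. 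Adding the two sides, the good-minus-bad total Kendall-Tau distance is $1 - (1 + 2A) = -2A \le 0$. Under Mallows with common accuracy $\phi$, the joint probability ratio of a good event to its paired bad event is therefore $\exp(2\phi A) \ge 1$, strict exactly when $A \ge 1$.

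To clinch strict complementarity, I need one paired event with $A \ge 1$, which exists for any $n \ge 3$ (forced by $k = 2 < n$): take $\rho^a = [x_n, x_1, x_2, \ldots, x_{n-1}]$ and $\rho^h = [x_1, x_2, \ldots, x_n]$, so every intermediate element $x_2, \ldots, x_{n-1}$ in $\rho^h$ has index smaller than $n = j$, giving $A = n - 2 \ge 1$. The main obstacle I expect is the human-side inversion count: one must separately track the contributions of $(x_c, x_1)$ and $(x_c, x_j)$ for each intermediate $x_c$, track the sign flip depending on whether $c < j$ or $c > j$, and verify that elements at positions outside $[m', \ell']$ contribute nothing to the difference.
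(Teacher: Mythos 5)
Your proposal is correct and follows essentially the same route as the paper's proof: the best-item-mapping bijection between good and bad events, an inversion count showing the algorithm side loses exactly one inversion while the human side gains at least one (cancelling under $\phi^a=\phi^h$), and an explicit pair witnessing strictness. Your pairwise Kendall--Tau accounting is a slight sharpening — it yields the exact ratio $\exp(2\phi A)$ rather than the paper's one-sided bound via adjacent-swap counting — but the argument is the same in substance.
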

While we will defer a full proof to Appendix \ref{app:proofs}, we will offer a proof sketch: 
\begin{proof}[Proof sketch]
This proof uses the best-item mapping from Definition \ref{def:bestmap}. In particular, we take any \enquote{good event}, apply the best-item mapping, and show that the corresponding \enquote{bad event} is equally or less likely than the corresponding \enquote{good event}. Given the Mallows model, a permutation $\pi$ is more likely if they involve fewer \emph{inversions} (instances where $i < j$ but $\pi_i < \pi_j$: a lower-valued item is ranked above a higher-valued item). 

First, we consider the algorithm's permutations.
Best-item mapping works by flipping the rank of the best item $x_1$ and $x_j$, defined as whichever item the algorithm ranked first in the \enquote{good event}. Here, we show that best-item mapping actually \emph{decreases} the total number of inversions by exactly one, making the \enquote{bad event} ordering for the algorithm strictly \emph{more} likely. Decreasing the number of inversions is the \emph{opposite} of the overall goal of this proof; the requirement that $\npresent=2$ is what upper bounds this number of inversions by exactly 1. 

However, we show that this effect is counteracted by the human's permutation Specifically, we consider each \enquote{good event} case in Definition \ref{def:goodevent} and show that best-item mapping always \emph{increases} the total number of inversions by at least one. Because the human and algorithm are assumed to have equal accuracy rates, the increase in inversions from the human's permutations cancels out the decrease in inversions from the algorithm's permutations, showing that the \enquote{bad event} is no more likely than the corresponding \enquote{good event}. 

The proof concludes by constructing an example where the \enquote{good event} is \emph{strictly} more likely than the \enquote{bad event}, showing that the total probability of \enquote{good events} is strictly more likely than the total probability of \enquote{bad events}. 
\end{proof}

Finally, we wish to comment briefly on the permutation distributions $\algdist, \humdist$. Theorem \ref{thrm:unanchgood} is specific to the Mallows model, but the proof technique relies very weakly on the Mallows assumption. Specifically, the only property that is necessary is that the best-item mapping in Definition \ref{def:bestmap} weakly increases the probability of permutations occurring. For Mallows model, this is satisfied because the probability of a permutation occurring is governed by the number of inversions present. Other probability distributions satisfying this property would show identical properties to those proven in Theorem \ref{thrm:unanchgood}\footnote{Identical reasoning would similarly extend Theorem \ref{thrm:anchbad} in Section \ref{sec:anch}.}. 

\subsection{Unequal accuracy}\label{sec:diffacc}
In the previous section, Theorem \ref{thrm:unanchgood} showed that complementarity is possible in the unanchored (independent) case with identical accuracy rates $\phi_h = \phi_a$ with $\npresent=2$. In this section, we will further explore the unanchored setting, but allowing accuracy rates to differ. Specifically, we will show that there always exist regions of complementarity: cases where a more accurate agent would strictly increase its accuracy by collaborating with a less accurate partner. However, these regions are \emph{asymmetric}: it is more likely that a more accurate human would gain from collaborating than a more accurate algorithm. 
Throughout this section, we will model the algorithm and human permutations as coming from a Mallows model. For analytical tractability, our theoretical results will focus on the case with $\nitem=3, \npresent=2$.

First, we characterize the regions of complementarity. For unequal accuracy rates, these regions are equivalent to cases where there are \emph{strict benefits} to collaborating with a less accurate partner. Lemma \ref{lem:humregion} shows that, no matter how accurate the human is, there always exists a (slightly) less accurate algorithm such that the joint system is strictly more accurate than either (achieving complementarity). 

\begin{restatable}[More accurate human]{lemma}{humregion}
\label{lem:humregion}
Consider $\nitem=3, \npresent=2$ where the human and algorithm both have unanchored Mallows models with differing accuracy parameters $\phi_a \ne \phi_h$. Then, there exists a region of complementarity where a more accurate human obtains higher accuracy when collaborating with a less accurate algorithm. 
Specifically, for all $\phi_h > 0$, so long as $\phi_a  \geq \max\p{\frac{\phi_h}{1.3}, \phi_h - 0.3}$ the joint system has better performance than either the human alone or algorithm alone.
\end{restatable}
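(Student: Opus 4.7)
The plan is to reduce the statement to a pair of polynomial inequalities in $q_a := e^{-\phi_a}$ and $q_h := e^{-\phi_h}$, and then verify that the prescribed region lies inside the complementarity region. For $\nitem = 3$ the Mallows model admits an unusually clean closed form, which I would exploit throughout.

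First I would enumerate the six permutations on $\cb{x_1,x_2,x_3}$ by Kendall--Tau distance (with inversion counts $0,1,1,2,2,3$) and use the factorization $Z(q) = 1+2q+2q^2+q^3 = (1+q)(1+q+q^2)$. This gives the marginals $P\p{\pi_1 = x_i} = q^{i-1}/(1+q+q^2)$, the distribution over the algorithm's top-two set,
\[
P\p{\pi^a_{[2]}=\cb{x_1,x_2}} = \frac{1}{1+q_a+q_a^2}, \qquad P\p{\pi^a_{[2]}=\cb{x_1,x_3}} = \frac{q_a}{1+q_a+q_a^2}
\]
(the case $\cb{x_2,x_3}$ is irrelevant since $x_1$ is not presented), and the two pairwise probabilities needed on the human side, $P(x_1 \text{ before } x_2 \text{ in } \pi^h) = 1/(1+q_h)$ and $P(x_1 \text{ before } x_3 \text{ in } \pi^h) = (1+2q_h)/\br{(1+q_h)(1+q_h+q_h^2)}$. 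Combining by total probability, the joint accuracy simplifies to a single rational function with numerator $1+q_h+q_h^2+q_a+2q_aq_h$ and denominator $(1+q_a+q_a^2)(1+q_h)(1+q_h+q_h^2)$.

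Subtracting the stand-alone accuracies $P_a = 1/(1+q_a+q_a^2)$ and $P_h = 1/(1+q_h+q_h^2)$ and clearing denominators, the two complementarity requirements reduce to
\[
q_a(1+2q_h) > q_h(1+q_h+q_h^2) \qquad \text{and} \qquad q_h(q_h+q_a) > q_a^2(1+q_h).
\]
Under the standing hypothesis that the human is strictly more accurate, $q_a > q_h$, the first inequality is automatic because $q_h(1+q_h+q_h^2)/(1+2q_h) < q_h < q_a$ whenever $q_h<1$. The binding constraint is the second, which viewed as a quadratic in $q_a$ is equivalent to $q_a < q^+(q_h) := q_h\p{1+\sqrt{5+4q_h}}/\br{2(1+q_h)}$.

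The remaining task --- and what I expect to be the main obstacle --- is to show that the prescribed bound $\phi_a \geq \max\p{\phi_h/1.3,\ \phi_h-0.3}$, equivalently $q_a \leq \min\p{q_h^{1/1.3},\ e^{0.3}q_h}$, implies $q_a < q^+(q_h)$ uniformly in $q_h\in(0,1)$. Neither single bound is tight on the whole range: as $q_h\to 0^+$, $q^+(q_h)/q_h \to (1+\sqrt{5})/2 \approx 1.618$, so the linear bound $e^{0.3}q_h \approx 1.35\,q_h$ comfortably sits below $q^+$ while the power bound $q_h^{1/1.3}$ is far too large; conversely, as $q_h\to 1^-$, $e^{0.3}q_h > 1 = q^+(1)$ fails outright, while $q_h^{1/1.3}\to 1$ with slope $1/1.3 \approx 0.77$, which exceeds the slope $(q^+)'(1) = 2/3$, so $q_h^{1/1.3}$ lies just below $q^+$ near the right endpoint. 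The two arms of the max happen to agree precisely at $\phi_h = 1.3$, i.e.\ $q_h = e^{-1.3}$, where both sub-bounds equal $e^{-1}$, and direct computation gives $q^+(e^{-1.3}) \approx 0.371 > e^{-1} \approx 0.368$, so the bound holds with strict slack at the crossover. I would finish by verifying positivity of $q^+(q_h) - e^{0.3}q_h$ on $(0,\ e^{-1.3}]$ and of $q^+(q_h) - q_h^{1/1.3}$ on $[e^{-1.3},\ 1)$ via a one-variable calculus argument, each of which has the correct sign at the relevant endpoints and by monotonicity of the derivative admits at most one interior critical point to rule out.
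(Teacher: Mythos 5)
Your proposal is correct and follows essentially the same route as the paper: closed-form Mallows probabilities for $\nitem=3$, reduction of complementarity to the same two inequalities (your two conditions in $q=e^{-\phi}$ are exactly Equations \ref{eq:algbetter} and \ref{eq:humbetter}), automatic satisfaction of the beats-the-algorithm condition when the human is more accurate, and an endpoint/critical-point calculus check of the beats-the-human condition on the boundary of the prescribed region. Your derivation of the joint accuracy via the algorithm's top-two-set marginal times the human's pairwise preference, and your explicit quadratic threshold $q^+(q_h)$ giving the exact complementarity boundary, are clean refinements of the paper's computation but not a genuinely different argument.
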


For context, a Mallows model with $\nitem=3$ recovers the correct permutation $[x_1, x_2, x_3]$ with probability 48\% of the time with $\phi = 1$ and 57\% of the time with $\phi = 1.3$, so the regions in Lemma \ref{lem:humregion} represent moderate but meaningful differences in accuracy levels.

Next, Lemma \ref{lem:algregion} gives a corresponding result for when the algorithm is more accurate than the human. However, this region differs substantially from that in Lemma \ref{lem:humregion}: it is substantially narrower, indicating a much smaller range where complementarity is possible.

\begin{restatable}[More accurate algorithm]{lemma}{algregion}
\label{lem:algregion}
However, the roles of the human and algorithm are not symmetric: for the same setting as in Lemma \ref{lem:humregion}, the zone of complementarity is much narrower. Specifically, complementarity is possible for $\phi_a \in [\phi_h, \phi_h\cd (1 + 0.01)]$, for all $\phi_h\leq 1$, but is never possible for any $\phi_a \geq \phi_h + 0.15$ for $\phi_a \geq 1$. 
\end{restatable}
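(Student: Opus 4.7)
The plan is to exploit the small state space: with $\nitem=3$ and $\npresent=2$ there are only six possible permutations for each of the human and the algorithm, so the joint accuracy can be written in closed form as a rational function of $q_a := e^{-\phi_a}$ and $q_h := e^{-\phi_h}$. First I would enumerate the six algorithm permutations, compute for each which pair of items is presented, and sum over those human permutations that rank $x_1$ above the other presented item. Using $Z = (1+q)(1+q+q^2)$ for the Mallows normalizer and the fact that $P(\pi^h$ ranks $x_1$ above $x_2) = (1+q_h+q_h^2)/Z_h$ while $P(\pi^h$ ranks $x_1$ above $x_3) = (1+2q_h)/Z_h$, the resulting expression simplifies to
\[
P_{\text{joint}}
= \frac{1 + q_a + q_h + q_h^2 + 2 q_a q_h}{(1+q_a+q_a^2)(1+q_h)(1+q_h+q_h^2)}.
\]
The marginal probabilities are $P_{\text{alg}} = 1/(1+q_a+q_a^2)$ and $P_{\text{hum}} = 1/(1+q_h+q_h^2)$.

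Next, I would identify the binding constraint. Since $\phi_a > \phi_h$ means $q_a < q_h$, we have $P_{\text{alg}} > P_{\text{hum}}$, so complementarity reduces to $P_{\text{joint}} > P_{\text{alg}}$. A short algebraic manipulation converts this into
\[
q_a \,(1 + 2 q_h) \;>\; q_h\,(1 + q_h + q_h^2),
\]
or equivalently $\phi_a < \phi_h + g(\phi_h)$, where
\[
g(\phi_h) \;:=\; \ln\!\left(\frac{1+2 q_h}{1+q_h+q_h^2}\right).
\]
A parallel check (expanding $P_{\text{joint}} > P_{\text{hum}}$) shows this inequality is automatic whenever $q_a < q_h < 1$, so $g(\phi_h)$ is the only obstruction to complementarity.

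For the lower bound of the lemma, I would show $g(\phi_h) > 0.01\,\phi_h$ on $(0,1]$. Taylor expansion near $q_h = 1$ gives $g(\phi_h) = \phi_h/3 + O(\phi_h^2)$, so the ratio $g(\phi_h)/\phi_h$ tends to $1/3$ as $\phi_h \downarrow 0$; and direct substitution at $\phi_h = 1$ gives $g(1) = \ln(1.1548\ldots) \approx 0.144$, so the ratio is at least $0.144$ there. A monotonicity argument (or a few evenly spaced numerical checkpoints) then certifies $g(\phi_h)/\phi_h > 0.01$ throughout $(0,1]$, so the interval $[\phi_h,\,1.01\,\phi_h]$ is contained in the complementarity window.

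Finally, for the non-existence part, I would show $g(\phi_h) < 0.15$ for \emph{all} $\phi_h \ge 0$. Viewed as a function of $q_h$, $g$ vanishes at both $q_h = 0$ and $q_h = 1$, and setting $dg/dq_h = 0$ yields $2(1 + q_h + q_h^2) = (1 + 2 q_h)^2$, i.e.\ $2 q_h^2 + 2 q_h - 1 = 0$, so the unique critical point is $q_h^* = (\sqrt{3}-1)/2$. Plugging in gives $1 + 2q_h^* = \sqrt{3}$ and $1 + q_h^* + (q_h^*)^2 = 3/2$, so $g_{\max} = \ln(2\sqrt{3}/3) \approx 0.1438 < 0.15$. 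Consequently $\phi_a \ge \phi_h + 0.15$ violates $\phi_a < \phi_h + g(\phi_h)$ unconditionally, and complementarity fails. The main obstacle is organizing the case enumeration and the algebra so that the closed form above is obtained cleanly; once it is, the two bounds reduce to locating the unique interior maximum of a one-variable function.
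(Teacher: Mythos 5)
Your proposal is correct, and it follows the same basic strategy as the paper (closed-form Mallows probabilities for $\nitem=3,\npresent=2$, then reduction of complementarity to an explicit exponential inequality — your condition $q_a(1+2q_h) > q_h(1+q_h+q_h^2)$ is exactly the paper's Equation (2) after multiplying through by $e^{-\phi_a-3\phi_h}$). Where you diverge is in how the two bounds are then extracted, and your route is cleaner and yields more. The paper substitutes the specific boundary curves $\phi_a = 1.1\,\phi_h$ and $\phi_a = \phi_h + 0.15$ into the inequality and argues the sign of the resulting one-variable expressions by inspection (and its impossibility step is written confusingly, appearing to reuse the human-side inequality with swapped arguments). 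You instead solve the inequality for $\phi_a$ to obtain the exact complementarity boundary $\phi_a < \phi_h + g(\phi_h)$ with $g(\phi_h) = \ln\bigl((1+2q_h)/(1+q_h+q_h^2)\bigr)$, verify that the human-side constraint is slack, and then maximize $g$ over $q_h$ to get the sharp constant $g_{\max} = \ln(2/\sqrt{3}) \approx 0.1438$. That single computation delivers the impossibility claim for \emph{all} $\phi_h$ (strictly stronger than the paper's restriction to $\phi_a \ge 1$) and explains why $0.15$ works while, say, $0.14$ would not; it also makes the existence half nearly trivial, since $g(\phi_h)/\phi_h \ge g(1) \approx 0.144 \gg 0.01$ on $(0,1]$. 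The only soft spot is the final certification that $g(\phi_h)/\phi_h > 0.01$ on $(0,1]$, which you leave to ``monotonicity or checkpoints''; this should be nailed down (e.g., note that $g$ is increasing in $\phi_h$ on $(0,1]$ because $q_h \ge e^{-1} > q_h^*$ there, so $g(\phi_h) \ge g(\phi_h)\cdot\frac{\phi_h}{\phi_h}$ can be bounded below via $g$'s concave behavior near $0$ together with $g(1)>0.01$), but the margin is so large that this is a routine gap, not a conceptual one.
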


Finally, we will establish that the accuracy of the joint system is \emph{asymmetric} in the its dependence on the accuracy of the human and algorithm alone. Specifically, Lemma \ref{lem:orderhuman} shows that the performance of the joint system is always higher when the more accurate actor is the human, rather than the algorithm. For intuition for this asymmetry, consider the marginal impact of a more accurate algorithm - it will be more likely to include the best item $x_1$ among the $\npresent=2$ it presents. However, once the algorithm is sufficiently accurate, it will almost always present $x_1$, so increasing accuracy will have diminishing returns. A more accurate human will be more likely to select the best item $x_1$, given that it is presented - which will more directly make the joint human-algorithm system more accurate. 

\begin{restatable}{lemma}{orderhuman}
\label{lem:orderhuman}
Given any two sets of Mallows accuracies $\phi_1 > \phi_2$, for $\nitem =3, \npresent=2$, the joint system always has strictly higher accuracy whenever $\phi_a = \phi_1 > \phi_h = \phi_2$. 
\end{restatable}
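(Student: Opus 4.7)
The proof reduces to an explicit finite calculation, since $\nitem = 3$ admits only $3! = 6$ Mallows permutations per agent. My plan is to first derive the joint accuracy $A(\phi_a, \phi_h)$ in closed form, and then compare the two possible assignments of $\{\phi_1, \phi_2\}$ to $(\phi_a, \phi_h)$ by forming their difference. Setting $q_a = e^{-\phi_a}$ and $q_h = e^{-\phi_h}$, the Mallows normalizer factors as $Z(q) = (1+q)(1+q+q^2)$. The algorithm presents the three possible pairs $\{x_1,x_2\}, \{x_1,x_3\}, \{x_2,x_3\}$ with probabilities $(1+q_a)/Z_a$, $q_a(1+q_a)/Z_a$, and $q_a^2(1+q_a)/Z_a$ respectively; conditional on a presented pair $\{x_1, x_j\}$, the human picks $x_1$ with probability $(1+q_h+q_h^2)/Z_h$ when $j=2$ and $(1+2q_h)/Z_h$ when $j=3$. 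Multiplying and factoring yields the compact closed form
\[
A(\phi_a, \phi_h) \;=\; \frac{1 + q_a + q_h + 2 q_a q_h + q_h^2}{(1+q_a+q_a^2)\,(1+q_h)\,(1+q_h+q_h^2)}.
\]

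Next I would form the difference $A(\phi_1, \phi_2) - A(\phi_2, \phi_1)$ over the common denominator $(1+q_1)(1+q_1+q_1^2)(1+q_2)(1+q_2+q_2^2)$. This denominator is strictly positive and symmetric under $q_1 \leftrightarrow q_2$, so the sign of the whole difference is determined entirely by the numerator $(1+q_1)\,N(q_1, q_2) - (1+q_2)\,N(q_2, q_1)$, where $N(x, y) = 1 + x + y + 2xy + y^2$. Every symmetric monomial in this expansion cancels, and the remaining asymmetric terms collapse to a single factored expression of the form $(q_1 - q_2)(1 + q_1 q_2)$, up to an overall sign determined by which argument plays the role of $\phi_a$. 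Since $1 + q_1 q_2 > 0$ on the entire positive quadrant, the sign of the difference is controlled by the linear factor $q_1 - q_2$; because $q = e^{-\phi}$ is monotonically decreasing in $\phi$, this in turn is controlled by $\phi_1 - \phi_2$. Strictness of the resulting inequality follows immediately because $\phi_1 > \phi_2$ forces that linear factor to be strictly nonzero.

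The main obstacle is the polynomial bookkeeping in the middle step: the numerator expansion produces on the order of a dozen monomials of mixed degree in $q_1, q_2$, and the clean two-factor simplification only emerges after a careful pairing and cancellation of symmetric terms. Once the factored form $(q_1 - q_2)(1 + q_1 q_2)$ is in hand, the rest of the argument is routine sign analysis combined with the monotone translation from $q$ back to $\phi$, and the conclusion for the configuration $\phi_a = \phi_1 > \phi_h = \phi_2$ drops out directly. No deeper structural argument is required beyond this algebraic reduction, so the chief risk is arithmetic error rather than conceptual obstruction; the whole argument fits on a single page.
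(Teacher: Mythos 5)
Your proposal is correct and takes essentially the same route as the paper: derive the closed-form joint accuracy for $\nitem=3$, $\npresent=2$, form the difference under swapping the two accuracy parameters over the common (symmetric, positive) denominator, and observe that after cancellation the numerator collapses to $(q_a-q_h)(1+q_a q_h)$ --- which is exactly the paper's expression $-e^{\phi_a}+e^{\phi_h}-e^{2\phi_a+\phi_h}+e^{\phi_a+2\phi_h}$ rewritten with $q=e^{-\phi}$, and your closed form for $A(\phi_a,\phi_h)$ matches the paper's $P_c$ after factoring out $(1+q_a)$. The only thing to add is the explicit direction of the inequality: since $q=e^{-\phi}$ is decreasing, $\phi_1>\phi_2$ gives $q_1<q_2$, so your factored form shows the joint system is strictly \emph{more} accurate when the more accurate agent occupies the human's role, which is the conclusion the paper's proof and surrounding discussion actually draw.
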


As a direct corollary, we can see the regions of complementarity (when they exist) are asymmetric: 

\begin{corollary}
For Mallows model ($\nitem=3, \npresent=2$), the regions of complementarity are \emph{asymmetric} in human and algorithmic accuracy rates. 
\end{corollary}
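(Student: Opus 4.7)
The plan is to deduce the corollary as an immediate consequence of Lemma \ref{lem:orderhuman} together with the observation that complementarity is defined via a condition whose individual-agent part is invariant under swapping the labels ``human'' and ``algorithm.'' Specifically, in the Mallows model the stand-alone accuracy of an agent depends only on that agent's own parameter $\phi$, so if we hold the unordered pair $\{\phi_1,\phi_2\}$ fixed and reassign which value is the human's and which is the algorithm's, the two individual accuracies that the joint system must beat are unchanged, while by Lemma \ref{lem:orderhuman} the joint accuracy itself strictly differs between the two assignments.

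Carrying this out, I would fix $\phi_1>\phi_2$ and write $J(\phi_h,\phi_a)$ for the joint accuracy and $p(\phi)$ for the single-agent Mallows accuracy. The complementarity condition at $(\phi_h,\phi_a)$ is $J(\phi_h,\phi_a)>\max\{p(\phi_h),p(\phi_a)\}$, and the right-hand side takes the same value at $(\phi_1,\phi_2)$ and at $(\phi_2,\phi_1)$. Lemma \ref{lem:orderhuman} then gives a strict inequality between $J(\phi_1,\phi_2)$ and $J(\phi_2,\phi_1)$, so the complementarity condition is strictly easier to satisfy in one of the two assignments, meaning the complementarity region is not symmetric across the diagonal $\phi_h=\phi_a$.

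To exhibit the asymmetry concretely rather than as a mere existence claim, I would invoke the quantitative bounds already established in Lemmas \ref{lem:humregion} and \ref{lem:algregion}: the former guarantees a wide slab of complementarity whenever the human holds the larger accuracy (e.g.\ all $\phi_a \geq \max(\phi_h/1.3,\phi_h-0.3)$), while the latter confines the complementary region to a narrow sliver when the algorithm holds the larger accuracy and rules it out entirely once $\phi_a \geq \phi_h+0.15$ with $\phi_a \geq 1$. Picking any pair $(\phi_h,\phi_a)$ that sits inside the first slab whose reflection $(\phi_a,\phi_h)$ falls outside the second then serves as an explicit witness to the asymmetry.

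The main obstacle is essentially bookkeeping rather than new mathematics: one must verify that complementarity is invariant at the individual-accuracy level under the human/algorithm label swap, which is trivially true in the Mallows setup because both agents are drawn from Mallows distributions centered at the same true permutation with the same distance metric. Once that invariance is noted, the entire asymmetry in the complementarity region is funnelled through the joint term $J$, and Lemma \ref{lem:orderhuman} supplies exactly the strict inequality that makes the corollary immediate.
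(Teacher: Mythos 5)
Your proposal is correct and follows essentially the same route as the paper, which states the corollary as a direct consequence of Lemma \ref{lem:orderhuman} in the context of Lemmas \ref{lem:humregion} and \ref{lem:algregion}. You are in fact slightly more careful than the paper in noting that the strict inequality in the joint accuracy must be combined with an explicit witness point (supplied by the two regional lemmas) to conclude that membership in the complementarity region actually flips under the reflection $\phi_h \leftrightarrow \phi_a$.
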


This explains why the region of complementarity in Lemma \ref{lem:humregion} and Lemma \ref{lem:algregion} are larger when the human is the more accurate one - the human's accuracy more rapidly increases the accuracy of the joint system, which outperforms the more accurate actor (here, the human) for a wider range of accuracy differentials.

The results from this section are illustrated in Figure \ref{fig:diff_acc_symbolic}. The contour plot gives the accuracy of the joint human-algorithm system, which is strictly increasing in $\phi^a, \phi^h$. Overlaid in blue is the analytically derived region of complementarity. The regions derived in Lemmas \ref{lem:humregion} and \ref{lem:algregion} are overlaid in red and white, respectively. Note that the red region (where the complementarity occurs with a more accurate human) encompasses almost all of the zone of complementarity, while the white region (where complementarity occurs with a more accurate algorithm) is comparatively minuscule.

\begin{figure}
    \centering
    \ifarxiv
        \includegraphics[width=2.5in]{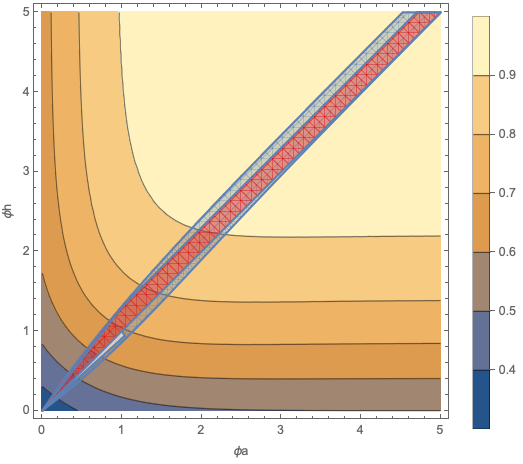}
    \else 
        \includegraphics[width=2.2in]{mallows_contour_02_23_24}
        \fi
    \caption{A plot showing relative accuracy of the joint system for differing algorithm and human accuracy $\phi_a, \phi_h$, given a Mallows model distribution for each actor with $\nitem=3, \npresent=2$.     The contour plot gives the accuracy of the joint human-algorithm system, which is strictly increasing in $\phi^a, \phi^h$. Overlaid in blue is the analytically derived region of complementarity. The regions derived in Lemmas \ref{lem:humregion} and \ref{lem:algregion} are overlaid in red and white, respectively.
    Note that this plot is symbolic and thus not based on simulations. }
    \label{fig:diff_acc_symbolic}
\end{figure}

\section{Impact of anchoring}\label{sec:anch}

In previous section, we assumed that the human and algorithm drew \emph{independent} realized permutations over items. In this section, we will relax that assumption and consider cases where the permutations may be correlated. There are a few natural settings where this might fail to hold: 
\begin{itemize}
    \item The human and the algorithm have correlated sources of information: when one makes an error, the other is also likely to make an error. 
    \item Human cognitive biases: if the human observes the algorithm's ordering before formalizing their own ordering, the human may \emph{anchor} on the algorithm's ordering and allow it to influence their own beliefs. 
\end{itemize}
In general, we will show that anchoring makes achieving complementarity difficult or impossible, depending on the degree of anchoring. As described in Section \ref{sec:model}, for the Mallows model we we model the distance of a particular permutation $\pi$ as given by the weighted average of the distance to the human's central distribution $\pi^*$ and the algorithm's realized permutation $\pi^a$: 
$$d(\pi^*, \pi^a, \pi, w_a) = (1-w_a) \cd d(\pi^*,\pi) + w_a \cd d(\pi^a, \pi)$$

First, we begin with the most extreme form of anchoring: $w_a=1$, where the human takes the algorithm's realized permutation $\pi^a$ as their \enquote{ground truth}. Theorem \ref{thrm:anchbad} below, begins by showing that when such  anchoring is present, the joint system always has strictly worse accuracy than the algorithm alone - no matter how many items are presented $\npresent$ or the relative accuracy rates of the human and algorithm $\phi^a, \phi^h$. This is a quite general impossibility result, indicating that a wide range of conditions lead to undesirable performance. 

\begin{restatable}{theorem}{anchbad}
\label{thrm:anchbad}
In the anchored setting with Mallows model distributions for permutations, the probability of picking the best arm strictly \emph{decreases} in the joint human-algorithm system, as compared to the algorithm alone. This holds for any $\npresent< \nitem$, no matter the accuracy rates for the algorithm and human $\phi^a, \phi^a$.
\end{restatable}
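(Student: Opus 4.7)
The plan is to apply the best-item bijection from Lemma \ref{lem:bijective} and show that under full anchoring ($w_a = 1$) every good event is strictly \emph{less} likely than its paired bad event, so by Definitions \ref{def:goodevent}--\ref{def:badevent} the joint system loses strictly more probability mass than it gains relative to the algorithm acting alone. Concretely, I would factor each joint probability as
$$P(\rho^a,\rho^h) \;=\; \tfrac{1}{Z_a}\exp\p{-\phi_a\, d(\pi^*,\rho^a)} \cd \tfrac{1}{Z_h}\exp\p{-\phi_h\, d(\rho^a,\rho^h)},$$
using that under $w_a=1$ the human's permutation is Mallows-distributed around $\rho^a$, and write the analogous expression for the paired bad event $(\pi^a,\pi^h)$.

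The first key step is that the human factor is invariant under best-item mapping. The mapping in Definition \ref{def:bestmap} applies the \emph{same} label swap $x_1 \leftrightarrow x_j$ to both $\rho^a$ and $\rho^h$. Since Kendall-Tau distance depends only on pairwise relative orders and a simultaneous relabeling preserves every such comparison, $d(\rho^a,\rho^h) = d(\pi^a,\pi^h)$. The normalizer $Z_h$ depends only on $\phi_h$ and $\nitem$, not on the center, so the entire human factor is unchanged by the mapping.

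The second key step is to show that $d(\pi^*,\pi^a) < d(\pi^*,\rho^a)$ strictly, so the algorithm factor collapses the ratio in the bad event's favor. I would count directly: in $\rho^a$ we have $x_j$ at position $1$ and $x_1$ at position $i \ge 2$, while $\pi^a$ is obtained by swapping these two. The pair $(x_1,x_j)$ switches from inverted (in $\rho^a$) to correctly ordered (in $\pi^a$), contributing $+1$ to $d(\pi^*,\rho^a) - d(\pi^*,\pi^a)$. For each $x_k$ with $k \ne 1,j$ at an intermediate position $p \in \{2,\dots,i-1\}$, the pair $(x_1, x_k)$ contributes another $+1$, and the pair involving $x_j$ contributes $+1$ if $k<j$ or cancels if $k>j$; items at positions $p > i$ contribute $0$. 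Summing, the net difference is at least $1$, with equality when $i=2$.

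Putting the two pieces together, the ratio of paired probabilities is
$$\frac{P(\pi^a,\pi^h)}{P(\rho^a,\rho^h)} \;=\; \exp\!\p{\phi_a\, \br{d(\pi^*,\rho^a) - d(\pi^*,\pi^a)}} \;\ge\; e^{\phi_a} \;>\; 1$$
for all $\phi_a > 0$, uniformly in $\phi_h$ and $\npresent$. Since Lemma \ref{lem:bijective} exhibits a bijection between good and bad events and every bad event strictly dominates its partner, the total bad-event mass strictly exceeds the total good-event mass, yielding the theorem. The main obstacle I anticipate is the inversion-counting in the third paragraph: swapping two non-adjacent items perturbs many pairs, so one has to track contributions carefully, but the argument only needs a lower bound of $1$, and the pair $(x_1,x_j)$ alone already delivers this — the middle-position contributions can only reinforce the bound, never undo it.
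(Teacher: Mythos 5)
Your proposal is correct and follows essentially the same route as the paper's proof: the best-item bijection, the observation that under full anchoring the human's Mallows factor is invariant because the simultaneous relabeling of $x_1$ and $x_j$ preserves Kendall--Tau distance to the (likewise relabeled) center, and the count showing the algorithm's permutation loses at least one inversion so each bad event is strictly more likely than its paired good event. The only cosmetic difference is that you count flipped pairs directly while the paper decomposes the transposition into adjacent swaps; both yield the same bound of one net inversion.
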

While we defer a full proof of Theorem \ref{thrm:anchbad} to Appendix \ref{app:proofs}, we give an informal proof sketch below: 
\begin{proof}[Proof sketch]
Similar to Theorem \ref{thrm:unanchgood}, this proof uses the best-item mapping from Definition \ref{def:bestmap}. In particular, we take any \enquote{good event}, apply the best-item mapping, and show that the corresponding \enquote{bad event} is strictly more likely than the \enquote{good event}. 

Again, given the Mallows model, a permutation $\pi$ is more likely if they involve fewer \emph{inversions} (instances where $i < j$ but $\pi_i < \pi_j$: a lower-valued item is ranked above a higher-valued item). Best-item mapping works by flipping the rank of the best item $x_1$ and $x_j$, defined as whichever item the algorithm ranked first in the \enquote{good event}. This mapping changes the relative ranking of $x_1$ and $x_j$, but also the pairwise ranking of every item that is in between $x_j$ and $x_1$. The full proof proves that this process always strictly \emph{decreases} the total number of inversions in the algorithm's ranking, relative to the \enquote{good event}. 

Next, we consider the human's permutation. Best-item mapping also flips the indices of $x_1, x_j$ in the human's permutation. However, in the anchored setting the human's distribution $\humdist$ is defined relative to the algorithm's presented permutation. Therefore, flipping the indices of $x_1, x_j$ for the algorithm is equivalent to relabeling the items, meaning that the human's \enquote{good event} permutation is exactly as likely as the human's \enquote{bad event} ordering, given the changed permutation\footnote{This same proof would hold regardless of if the human anchors on $\pi^a$ (the entire realized permutation) or $\pi^a_{[\npresent]}$ (only the $\npresent$) presented items. }. Because of this, our results hold no matter the accuracy rates of the human and algorithm $\phi^h, \phi^a$. 
\end{proof}

While Theorem \ref{thrm:anchbad}'s impossibility results are quite general, its degree of anchoring is also quite strict: $w_a=1$ requires that the human essentially ignore any extraneous information they may have. In Figure \ref{fig:mallows_anch} we relax this requirement and numerically explore settings with $w_a < 1$. In particular, this figure shows $w_a \in \{0, 0.25, 0.5, 0.75, 1\}$, with $\npresent \in [1, \nitem]$\footnote{Code is available to reproduce all simulations at  \url{https://github.com/kpdonahue/benefits_harms_joint_decision_making}.}. Points above the solid black line achieve complementarity, while lines below it do not. Note that complementarity is never achieved for $w_a =1$ (as predicted by Theorem \ref{thrm:anchbad}). Additionally, complementarity is achieved for $w_a = 0, \npresent=2$ (as predicted by Theorem \ref{thrm:unanchgood}). For intermediate $w_a$, complementarity is achieved if a) $w_a$ is relatively small and/or b) the number of items presented $\npresent$ is relatively small. Interestingly, this implies that in the presence of anchoring, it may be optimal to restrict the number of items that are presented, including potentially down to $\npresent=1$, allowing the algorithm to entirely pick the final item itself.

\begin{figure}
    \centering
    \ifarxiv
    \includegraphics[width=5in]{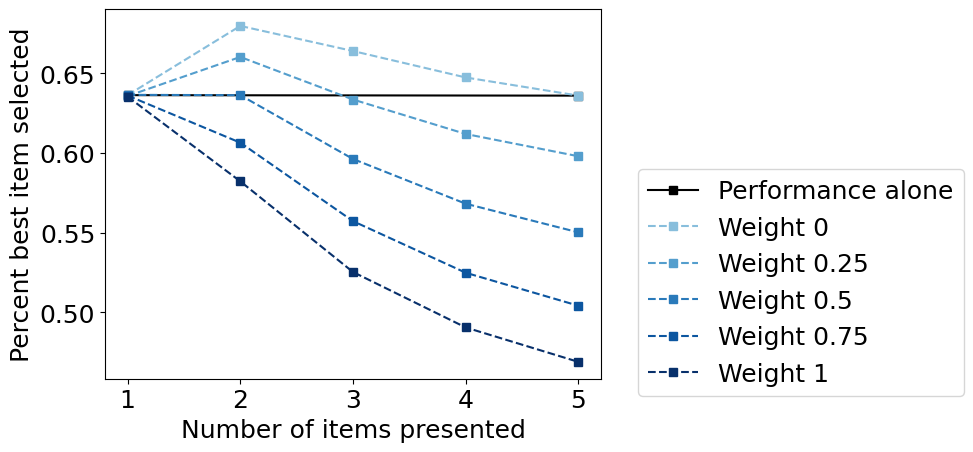}
    \else
    \includegraphics[width=3in]{plot_complete_perfect_partial_mallows_02_23_24.png}
    \fi
    \caption{$\nitem = 5$ total items, Mallows model, equal accuracy rates.  Displaying the impact of (partial) anchoring. The $x$ axis gives the number of items presented ($\npresent$): note that for $\npresent=1$ this is equivalent to the algorithm picking, while for $\npresent=\nitem=5$ this is equivalent to the human picking alone. Weight measures how strongly the human anchors on the algorithm, with $0$ representing independence and $1$ representing the strongest anchoring (as in Theorem \ref{thrm:anchbad}, where complementarity is impossible). Each point represents average of 10 trials each with $5\cd 10^4$ simulations each (error bars omitted, on the order of $0.01$). 
    }
    \label{fig:mallows_anch}
\end{figure}

\section{Random Utility Model}\label{sec:rum}
In the previous sections of this paper, we have primarily focused on giving theoretical results for the Mallows model. While the Mallows model is frequently used as a model of permutations, especially given its analytical tractability, it is not the only model of permutations. In particular, the Random Utility Model (RUM) is also frequently used (see Section \ref{sec:model} for a detailed description of these models). While the RUM is less tractable theoretically, in this section we show numerically that the main phenomena we observe with the Mallows model also translates to the RUM - suggesting that our results are robust to the exact type of permutation generation distributions used.

\subsection{Unanchored (extension of Section \ref{sec:noanch})}
First, we consider the results in Section \ref{sec:noanch}, which showed that (for the Mallows model): 
\begin{itemize}
    \item When the human and algorithm have identical accuracy rates, complementarity is guaranteed when $\npresent=2$ items are presented (Theorem \ref{thrm:unanchgood}). 
    \item When the accuracy rates differ between the human and the algorithm, the impact on overall performance is asymmetric, with the human's accuracy being more impactful (Lemma \ref{lem:orderhuman}). 
\end{itemize}
We will observe numerically that both of these main results translate to the RUM. Recall that Figure \ref{fig:diff_acc_symbolic} showed \emph{regions of complementarity} in the Mallows model: Figure \ref{fig:normal_accuracy} displays a variant of this figure in the RUM (yellow regions show complementarity, purple regions show lack of complementarity, and the red line gives the $y=x$ line of symmetry). In the RUM, greater accuracy levels are reflected by smaller standard deviations in noise, so the axes reflect varying levels of algorithmic and human noise. Within Figure \ref{fig:normal_accuracy}, we can observe: 
\begin{itemize}
    \item The red $y=x$ line is always within the yellow region of complementarity, suggesting that complementarity is always possible given equal accuracy rates and $\npresent=2$, matching our results for Mallows in Section \ref{sec:equalacc}. 
    \item The yellow region of complementarity is asymmetric: it extends further up (more accurate human) than it does to the right (more accurate algorithm), matching our results for Mallows in Section \ref{sec:diffacc}. 
\end{itemize}
Finally, note that Figure \ref{fig:normal_accuracy} also differs from Figure \ref{fig:diff_acc_symbolic} in its parameters ($\nitem=10$ as compared to $\nitem=3$). This was done deliberately to demonstrate the robustness of the results in Section \ref{sec:diffacc} to larger values of $\nitem$. 

\begin{figure}
    \centering
    \ifarxiv
    \includegraphics[width=3in]{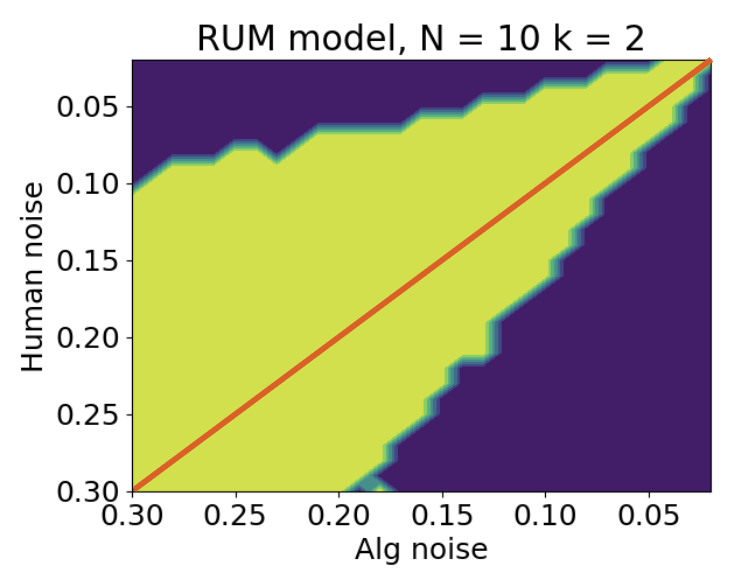}
    \else
    \includegraphics[width=3in]{diff_acc.png}
    \fi
    \caption{A version of Figure \ref{fig:diff_acc_symbolic}, but given a \emph{RUM} with \emph{Normal} distribution for each actor with $\nitem=10, \npresent=2$. Similar to Figure \ref{fig:diff_acc_symbolic}, the $x$ and $y$ axis show increasing accuracy (here, decreasing variance). For clarity, we have flipped the axes to match Figure \ref{fig:diff_acc_symbolic} so the lower left and upper right mean high noise and perfect accuracy, respectively. The yellow region is where complementarity occurs, while the purple region is where complementarity fails to occur, and the red line gives the $x=y$ axis of symmetry. 
    }
    \label{fig:normal_accuracy}
\end{figure}

\subsection{Anchoring (extension of Section \ref{sec:anch})}
Finally, in this section, we extend the anchoring results in Section \ref{sec:anch} to the Random Utility Model. As described in Section \ref{sec:model}, anchoring in the RUM is modeled such that the human draws their mean from a noise distribution with mean $w_a \cd \mu_j + (1-w_a) \cd \mu_i$, where $w_a$ is a weight parameter indicating how much the algorithm's ordering anchors the human's permutation, and $j$ is the index of item $i$ in the algorithm's permutation $\pi^a$. In this way, $w_a = 0$ reflects the unanchored case, while $w_a = 1$ reflects the anchored case. 

Figure \ref{fig:rum_anch} gives a RUM version of Figure \ref{fig:mallows_anch}, again given various weights $w_a$ and number of items presented $\npresent$. Again, we can see qualitatively similar results: complementarity is possible so long as a) $w_a$ is relatively small and/or b) the number of items presented $\npresent$ is relatively small.

\begin{figure}
    \centering
    \ifarxiv
    \includegraphics[width=5in]{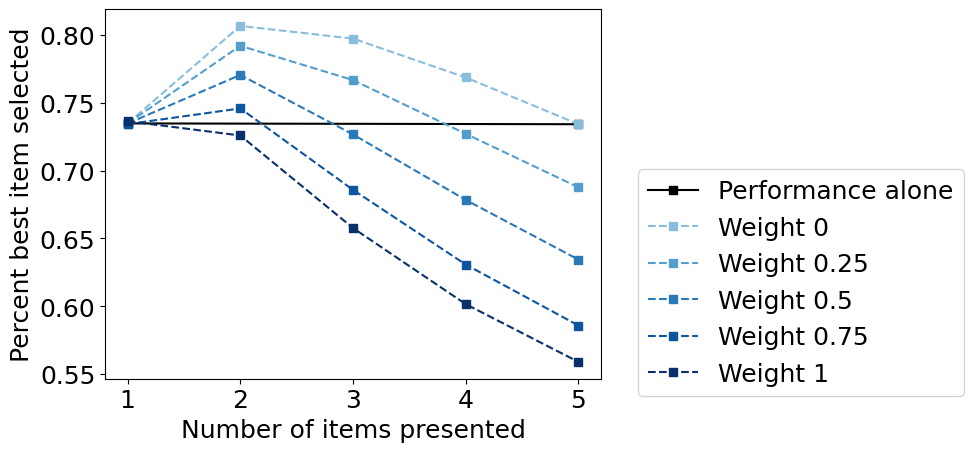}
    \else
    \includegraphics[width=3in]{plot_complete_perfect_partial_RUM_02_23_24.png}
    \fi
    \caption{$\nitem = 5$ total items, Random Utility model, equal accuracy rates.  Displaying the impact of (partial) anchoring. Version of Figure \ref{fig:mallows_anch} with the Random Utility model. The $x$ axis gives the number of items presented ($\npresent$): note that for $\npresent=1$ this is equivalent to the algorithm picking, while for $\npresent=\nitem=5$ this is equivalent to the human picking alone. Weight measures how strongly the human anchors on the algorithm, with $0$ representing independence and $1$ representing the strongest anchoring. Each point represents average of 10 trials each with $5*10^4$ simulations each (error bars omitted, on the order of $0.01$. 
    }
    \label{fig:rum_anch}
\end{figure}

\section{Discussion and future work}\label{sec:conclude}

In this paper, we have proposed a model of human-algorithm collaboration where neither the human or algorithm has ultimate say, but where they successively filter the set of $\nitem$ items down to $\npresent$ and finally a single choice. We focus on how the noise distributions $\algdist, \humdist$ influence whether the combined system has a higher chance of picking the best (correct) item. Future work extend our results to a wider range of noise model. Other interesting extensions could consider more complex models of human-algorithm collaboration - for example, cases where the human and algorithm can \enquote{vote} on the ordering of items, or other models of interaction. Additionally, they could explore cases where either the human or the algorithm is inherently biased - for example, when the algorithm has a central distribution that does that rank the best item first.

 \ifarxiv
\section{Acknowledgments}
We are extremely grateful to Kiran Tomlinson, Manish Raghavan, Manxi Wu, Aaron Tucker, Katherine Van Koevering, Kritkorn Karntikoon, Oliver Richardson, and Vasilis Syrgkanis for invaluable discussions. 
\else 
\fi 

\clearpage 
\bibliography{aaai24}

\begin{thebibliography}{52}
\providecommand{\natexlab}[1]{#1}
\providecommand{\url}[1]{\texttt{#1}}
\expandafter\ifx\csname urlstyle\endcsname\relax
  \providecommand{\doi}[1]{doi: #1}\else
  \providecommand{\doi}{doi: \begingroup \urlstyle{rm}\Url}\fi

\bibitem[Agarwal and Brown(2022)]{Agarwal2022DiversifiedRF}
A.~Agarwal and W.~Brown.
\newblock Diversified recommendations for agents with adaptive preferences.
\newblock \emph{ArXiv}, abs/2210.07773, 2022.

\bibitem[Agarwal and Brown(2023)]{agarwal2023online}
A.~Agarwal and W.~Brown.
\newblock Online recommendations for agents with discounted adaptive preferences.
\newblock \emph{arXiv preprint arXiv:2302.06014}, 2023.

\bibitem[Alur et~al.(2023)Alur, Laine, Li, Raghavan, Shah, and Shung]{alur2023auditing}
R.~Alur, L.~Laine, D.~K. Li, M.~Raghavan, D.~Shah, and D.~Shung.
\newblock Auditing for human expertise.
\newblock \emph{arXiv preprint arXiv:2306.01646}, 2023.

\bibitem[Angelopoulos et~al.(2020)Angelopoulos, Bates, Malik, and Jordan]{angelopoulos2020uncertainty}
A.~Angelopoulos, S.~Bates, J.~Malik, and M.~I. Jordan.
\newblock Uncertainty sets for image classifiers using conformal prediction.
\newblock \emph{arXiv preprint arXiv:2009.14193}, 2020.

\bibitem[Babbar et~al.(2022)Babbar, Bhatt, and Weller]{babbar2022utility}
V.~Babbar, U.~Bhatt, and A.~Weller.
\newblock On the utility of prediction sets in human-ai teams, 2022.

\bibitem[Bansal et~al.(2021{\natexlab{a}})Bansal, Nushi, Kamar, Horvitz, and Weld]{bansal2021most}
G.~Bansal, B.~Nushi, E.~Kamar, E.~Horvitz, and D.~S. Weld.
\newblock Is the most accurate {AI} the best teammate? {Optimizing} {AI} for teamwork.
\newblock In \emph{Proceedings of the AAAI Conference on Artificial Intelligence}, volume~35, 2021{\natexlab{a}}.

\bibitem[Bansal et~al.(2021{\natexlab{b}})Bansal, Wu, Zhou, Fok, Nushi, Kamar, Ribeiro, and Weld]{bansal2021does}
G.~Bansal, T.~Wu, J.~Zhou, R.~Fok, B.~Nushi, E.~Kamar, M.~T. Ribeiro, and D.~Weld.
\newblock Does the whole exceed its parts? {The} effect of {AI} explanations on complementary team performance.
\newblock In \emph{Proceedings of the 2021 CHI Conference on Human Factors in Computing Systems}, pages 1--16, 2021{\natexlab{b}}.

\bibitem[Bastani et~al.(2022)Bastani, Harsha, Perakis, and Singhvi]{bastani2022learning}
H.~Bastani, P.~Harsha, G.~Perakis, and D.~Singhvi.
\newblock Learning personalized product recommendations with customer disengagement.
\newblock \emph{Manufacturing \& Service Operations Management}, 24\penalty0 (4):\penalty0 2010--2028, 2022.

\bibitem[Beede et~al.(2020)Beede, Baylor, Hersch, Iurchenko, Wilcox, Ruamviboonsuk, and Vardoulakis]{beede2020human}
E.~Beede, E.~Baylor, F.~Hersch, A.~Iurchenko, L.~Wilcox, P.~Ruamviboonsuk, and L.~M. Vardoulakis.
\newblock A human-centered evaluation of a deep learning system deployed in clinics for the detection of diabetic retinopathy.
\newblock In \emph{Proceedings of the 2020 CHI Conference on Human Factors in Computing Systems}, pages 1--12, 2020.

\bibitem[Benz and Rodriguez(2023)]{benz2023humanaligned}
N.~L.~C. Benz and M.~G. Rodriguez.
\newblock Human-aligned calibration for ai-assisted decision making, 2023.

\bibitem[Blum et~al.(2022)Blum, Stangl, and Vakilian]{blum2022multi}
A.~Blum, K.~Stangl, and A.~Vakilian.
\newblock Multi stage screening: Enforcing fairness and maximizing efficiency in a pre-existing pipeline.
\newblock In \emph{Proceedings of the 2022 ACM Conference on Fairness, Accountability, and Transparency}, pages 1178--1193, 2022.

\bibitem[Bordt and Von~Luxburg(2022)]{bordt2022bandit}
S.~Bordt and U.~Von~Luxburg.
\newblock A bandit model for human-machine decision making with private information and opacity.
\newblock In \emph{International Conference on Artificial Intelligence and Statistics}, pages 7300--7319. PMLR, 2022.

\bibitem[Bower et~al.(2022)Bower, Lum, Lazovich, Yee, and Belli]{bower2022random}
A.~Bower, K.~Lum, T.~Lazovich, K.~Yee, and L.~Belli.
\newblock Random isn't always fair: Candidate set imbalance and exposure inequality in recommender systems.
\newblock \emph{arXiv preprint arXiv:2209.05000}, 2022.

\bibitem[Cabitza et~al.(2021)Cabitza, Campagner, and Sconfienza]{cabitza2021studying}
F.~Cabitza, A.~Campagner, and L.~M. Sconfienza.
\newblock Studying human-ai collaboration protocols: the case of the kasparov’s law in radiological double reading.
\newblock \emph{Health information science and systems}, 9:\penalty0 1--20, 2021.

\bibitem[Chan et~al.(2019)Chan, Hadfield-Menell, Srinivasa, and Dragan]{chan2019assistive}
L.~Chan, D.~Hadfield-Menell, S.~Srinivasa, and A.~Dragan.
\newblock The assistive multi-armed bandit.
\newblock In \emph{2019 14th ACM/IEEE International Conference on Human-Robot Interaction (HRI)}, pages 354--363. IEEE, 2019.

\bibitem[Chen et~al.(2023)Chen, Liao, Vaughan, and Bansal]{chen2023understanding}
V.~Chen, Q.~V. Liao, J.~W. Vaughan, and G.~Bansal.
\newblock Understanding the role of human intuition on reliance in human-ai decision-making with explanations.
\newblock \emph{arXiv preprint arXiv:2301.07255}, 2023.

\bibitem[Chouldechova et~al.(2018)Chouldechova, Benavides-Prado, Fialko, and Vaithianathan]{chouldechova2018case}
A.~Chouldechova, D.~Benavides-Prado, O.~Fialko, and R.~Vaithianathan.
\newblock A case study of algorithm-assisted decision making in child maltreatment hotline screening decisions.
\newblock In \emph{Conference on Fairness, Accountability and Transparency}, pages 134--148. PMLR, 2018.

\bibitem[Cowgill and Stevenson(2020)]{cowgill2020algorithmic}
B.~Cowgill and M.~T. Stevenson.
\newblock Algorithmic social engineering.
\newblock In \emph{AEA Papers and Proceedings}, volume 110, pages 96--100, 2020.

\bibitem[De-Arteaga et~al.(2020)De-Arteaga, Fogliato, and Chouldechova]{de2020case}
M.~De-Arteaga, R.~Fogliato, and A.~Chouldechova.
\newblock A case for humans-in-the-loop: Decisions in the presence of erroneous algorithmic scores.
\newblock In \emph{Proceedings of the 2020 CHI Conference on Human Factors in Computing Systems}, pages 1--12, 2020.

\bibitem[Donahue et~al.(2022)Donahue, Chouldechova, and Kenthapadi]{donahue2022human}
K.~Donahue, A.~Chouldechova, and K.~Kenthapadi.
\newblock Human-algorithm collaboration: Achieving complementarity and avoiding unfairness.
\newblock In \emph{Proceedings of the 2022 ACM Conference on Fairness, Accountability, and Transparency}, pages 1639--1656, 2022.

\bibitem[Dwork et~al.(2020)Dwork, Ilvento, and Jagadeesan]{dwork2020individual}
C.~Dwork, C.~Ilvento, and M.~Jagadeesan.
\newblock Individual fairness in pipelines.
\newblock \emph{arXiv preprint arXiv:2004.05167}, 2020.

\bibitem[Fogliato et~al.(2022)Fogliato, Chappidi, Lungren, Fisher, Wilson, Fitzke, Parkinson, Horvitz, Inkpen, and Nushi]{whogoesfirst}
R.~Fogliato, S.~Chappidi, M.~Lungren, P.~Fisher, D.~Wilson, M.~Fitzke, M.~Parkinson, E.~Horvitz, K.~Inkpen, and B.~Nushi.
\newblock Who goes first? influences of human-ai workflow on decision making in clinical imaging.
\newblock In \emph{Proceedings of the 2022 ACM Conference on Fairness, Accountability, and Transparency}, FAccT '22, page 1362–1374, New York, NY, USA, 2022. Association for Computing Machinery.
\newblock ISBN 9781450393522.
\newblock \doi{10.1145/3531146.3533193}.
\newblock URL \url{https://doi.org/10.1145/3531146.3533193}.

\bibitem[Gao et~al.(2021)Gao, Saar-Tsechansky, De-Arteaga, Han, Lee, and Lease]{gao2021human}
R.~Gao, M.~Saar-Tsechansky, M.~De-Arteaga, L.~Han, M.~K. Lee, and M.~Lease.
\newblock Human-ai collaboration with bandit feedback.
\newblock \emph{arXiv preprint arXiv:2105.10614}, 2021.

\bibitem[Guszcza et~al.(2022)Guszcza, Danks, Fox, Hammond, Ho, Imas, Landay, Levi, Logg, Picard, et~al.]{guszcza2022hybrid}
J.~Guszcza, D.~Danks, C.~R. Fox, K.~J. Hammond, D.~E. Ho, A.~Imas, J.~Landay, M.~Levi, J.~Logg, R.~W. Picard, et~al.
\newblock Hybrid intelligence: A paradigm for more responsible practice.
\newblock \emph{Available at SSRN}, 2022.

\bibitem[Hemmer et~al.(2022)Hemmer, Schellhammer, V{\"o}ssing, Jakubik, and Satzger]{hemmer2022forming}
P.~Hemmer, S.~Schellhammer, M.~V{\"o}ssing, J.~Jakubik, and G.~Satzger.
\newblock Forming effective human-ai teams: building machine learning models that complement the capabilities of multiple experts.
\newblock \emph{arXiv preprint arXiv:2206.07948}, 2022.

\bibitem[Hron et~al.(2021)Hron, Krauth, Jordan, and Kilbertus]{NEURIPS2021_162d1815}
J.~Hron, K.~Krauth, M.~Jordan, and N.~Kilbertus.
\newblock On component interactions in two-stage recommender systems.
\newblock In M.~Ranzato, A.~Beygelzimer, Y.~Dauphin, P.~Liang, and J.~W. Vaughan, editors, \emph{Advances in Neural Information Processing Systems}, volume~34, pages 2744--2757. Curran Associates, Inc., 2021.
\newblock URL \url{https://proceedings.neurips.cc/paper_files/paper/2021/file/162d18156abe38a3b32851b72b1d44f5-Paper.pdf}.

\bibitem[Hu et~al.(2022)Hu, Ngo, Slivkins, and Wu]{hu2022incentivizing}
X.~Hu, D.~D. Ngo, A.~Slivkins, and Z.~S. Wu.
\newblock Incentivizing combinatorial bandit exploration.
\newblock \emph{arXiv preprint arXiv:2206.00494}, 2022.

\bibitem[Immorlica et~al.(2018)Immorlica, Mao, Slivkins, and Wu]{immorlica2018incentivizing}
N.~Immorlica, J.~Mao, A.~Slivkins, and Z.~S. Wu.
\newblock Incentivizing exploration with selective data disclosure.
\newblock \emph{arXiv preprint arXiv:1811.06026}, 2018.

\bibitem[Kannan et~al.(2017)Kannan, Kearns, Morgenstern, Pai, Roth, Vohra, and Wu]{kannan2017fairness}
S.~Kannan, M.~Kearns, J.~Morgenstern, M.~Pai, A.~Roth, R.~Vohra, and Z.~S. Wu.
\newblock Fairness incentives for myopic agents.
\newblock In \emph{Proceedings of the 2017 ACM Conference on Economics and Computation}, pages 369--386, 2017.

\bibitem[Kleinberg and Raghavan(2021)]{kleinberg2021algorithmic}
J.~Kleinberg and M.~Raghavan.
\newblock Algorithmic monoculture and social welfare.
\newblock \emph{Proceedings of the National Academy of Sciences}, 118\penalty0 (22):\penalty0 e2018340118, 2021.

\bibitem[Lebovitz et~al.(2020)Lebovitz, Lifshitz-Assaf, and Levina]{lebovitz2020incorporate}
S.~Lebovitz, H.~Lifshitz-Assaf, and N.~Levina.
\newblock To incorporate or not to incorporate ai for critical judgments: The importance of ambiguity in professionals’ judgment process.
\newblock \emph{Collective Intelligence, The Association for Computing Machinery}, 2020.

\bibitem[Lebovitz et~al.(2021)Lebovitz, Levina, and Lifshitz-Assaf]{lebovitz2021ai}
S.~Lebovitz, N.~Levina, and H.~Lifshitz-Assaf.
\newblock Is {AI} ground truth really “true”? the dangers of training and evaluating {AI} tools based on experts’ know-what.
\newblock \emph{Management Information Systems Quarterly}, 2021.

\bibitem[Madras et~al.(2018)Madras, Pitassi, and Zemel]{madras2017predict}
D.~Madras, T.~Pitassi, and R.~Zemel.
\newblock Predict responsibly: Improving fairness and accuracy by learning to defer.
\newblock In S.~Bengio, H.~Wallach, H.~Larochelle, K.~Grauman, N.~Cesa-Bianchi, and R.~Garnett, editors, \emph{Advances in Neural Information Processing Systems}, volume~31. Curran Associates, Inc., 2018.
\newblock URL \url{https://proceedings.neurips.cc/paper/2018/file/09d37c08f7b129e96277388757530c72-Paper.pdf}.

\bibitem[Mallows(1957)]{mallows1957non}
C.~L. Mallows.
\newblock Non-null ranking models. i.
\newblock \emph{Biometrika}, 44\penalty0 (1/2):\penalty0 114--130, 1957.

\bibitem[Mclaughlin and Spiess(2023)]{Bryce_Spiess}
B.~Mclaughlin and J.~Spiess.
\newblock Algorithmic assistance with recommendation-dependent preferences.
\newblock In \emph{Proceedings of the 24th ACM Conference on Economics and Computation}, EC '23, page 991, New York, NY, USA, 2023. Association for Computing Machinery.
\newblock ISBN 9798400701047.
\newblock \doi{10.1145/3580507.3597775}.
\newblock URL \url{https://doi.org/10.1145/3580507.3597775}.

\bibitem[Mozannar et~al.(2023)Mozannar, Bansal, Fourney, and Horvitz]{mozannar2023show}
H.~Mozannar, G.~Bansal, A.~Fourney, and E.~Horvitz.
\newblock When to show a suggestion? integrating human feedback in ai-assisted programming.
\newblock \emph{arXiv preprint arXiv:2306.04930}, 2023.

\bibitem[Okolo et~al.(2021)Okolo, Kamath, Dell, and Vashistha]{okolo2021cannot}
C.~T. Okolo, S.~Kamath, N.~Dell, and A.~Vashistha.
\newblock “it cannot do all of my work”: Community health worker perceptions of ai-enabled mobile health applications in rural india.
\newblock In \emph{Proceedings of the 2021 CHI Conference on Human Factors in Computing Systems}, pages 1--20, 2021.

\bibitem[Raghu et~al.(2018)Raghu, Blumer, Corrado, Kleinberg, Obermeyer, and Mullainathan]{raghu2019algorithmic}
M.~Raghu, K.~Blumer, G.~Corrado, J.~Kleinberg, Z.~Obermeyer, and S.~Mullainathan.
\newblock The algorithmic automation problem: Prediction, triage, and human effort.
\newblock \emph{NeurIPS Workshop on Machine Learning for Health (ML4H)}, 2018.

\bibitem[Raghu et~al.(2019)Raghu, Blumer, Sayres, Obermeyer, Kleinberg, Mullainathan, and Kleinberg]{raghu2019direct}
M.~Raghu, K.~Blumer, R.~Sayres, Z.~Obermeyer, B.~Kleinberg, S.~Mullainathan, and J.~Kleinberg.
\newblock Direct uncertainty prediction for medical second opinions.
\newblock In \emph{International Conference on Machine Learning}, pages 5281--5290. PMLR, 2019.

\bibitem[Rambachan et~al.(2021)]{rambachan2021identifying}
A.~Rambachan et~al.
\newblock Identifying prediction mistakes in observational data.
\newblock \emph{Harvard University}, 2021.

\bibitem[Rastogi et~al.(2022)Rastogi, Leqi, Holstein, and Heidari]{rastogi2022unifying}
C.~Rastogi, L.~Leqi, K.~Holstein, and H.~Heidari.
\newblock A unifying framework for combining complementary strengths of humans and ml toward better predictive decision-making.
\newblock \emph{arXiv preprint arXiv:2204.10806}, 2022.

\bibitem[Steyvers et~al.(2022)Steyvers, Tejeda, Kerrigan, and Smyth]{steyvers2022bayesian}
M.~Steyvers, H.~Tejeda, G.~Kerrigan, and P.~Smyth.
\newblock Bayesian modeling of human--ai complementarity.
\newblock \emph{Proceedings of the National Academy of Sciences}, 119\penalty0 (11):\penalty0 e2111547119, 2022.

\bibitem[Straitouri and Rodriguez(2023)]{straitouri2023designing}
E.~Straitouri and M.~G. Rodriguez.
\newblock Designing decision support systems using counterfactual prediction sets, 2023.

\bibitem[Straitouri et~al.(2022)Straitouri, Wang, Okati, and Rodriguez]{straitouri2022provably}
E.~Straitouri, L.~Wang, N.~Okati, and M.~G. Rodriguez.
\newblock Provably improving expert predictions with conformal prediction, 2022.

\bibitem[Thurstone(1927)]{thurstone1927law}
L.~L. Thurstone.
\newblock A law of comparative judgment.
\newblock \emph{Psychological review}, 34\penalty0 (4):\penalty0 273, 1927.

\bibitem[Tian et~al.(2023)Tian, Tomizuka, Dragan, and Bajcsy]{tian2023towards}
R.~Tian, M.~Tomizuka, A.~D. Dragan, and A.~Bajcsy.
\newblock Towards modeling and influencing the dynamics of human learning.
\newblock In \emph{Proceedings of the 2023 ACM/IEEE International Conference on Human-Robot Interaction}, pages 350--358, 2023.

\bibitem[Vasconcelos et~al.(2023)Vasconcelos, J{\"o}rke, Grunde-McLaughlin, Gerstenberg, Bernstein, and Krishna]{vasconcelos2023explanations}
H.~Vasconcelos, M.~J{\"o}rke, M.~Grunde-McLaughlin, T.~Gerstenberg, M.~S. Bernstein, and R.~Krishna.
\newblock Explanations can reduce overreliance on ai systems during decision-making.
\newblock \emph{Proceedings of the ACM on Human-Computer Interaction}, 7\penalty0 (CSCW1):\penalty0 1--38, 2023.

\bibitem[Vovk et~al.(2005)Vovk, Gammerman, and Shafer]{vovk2005algorithmic}
V.~Vovk, A.~Gammerman, and G.~Shafer.
\newblock \emph{Algorithmic learning in a random world}, volume~29.
\newblock Springer, 2005.

\bibitem[Wang and Joachims(2023)]{wang2023uncertainty}
L.~Wang and T.~Joachims.
\newblock Uncertainty quantification for fairness in two-stage recommender systems.
\newblock In \emph{Proceedings of the Sixteenth ACM International Conference on Web Search and Data Mining}, pages 940--948, 2023.

\bibitem[Wang et~al.(2022)Wang, Joachims, and Rodriguez]{wang2022improving}
L.~Wang, T.~Joachims, and M.~G. Rodriguez.
\newblock Improving screening processes via calibrated subset selection.
\newblock In \emph{International Conference on Machine Learning}, pages 22702--22726. PMLR, 2022.

\bibitem[Yang et~al.(2018)Yang, Scuito, Zimmerman, Forlizzi, and Steinfeld]{yang2018investigating}
Q.~Yang, A.~Scuito, J.~Zimmerman, J.~Forlizzi, and A.~Steinfeld.
\newblock Investigating how experienced ux designers effectively work with machine learning.
\newblock In \emph{Proceedings of the 2018 Designing Interactive Systems Conference}, pages 585--596, 2018.

\bibitem[Yao et~al.(2023)Yao, Li, Nekipelov, Wang, and Xu]{yao2023bad}
F.~Yao, C.~Li, D.~Nekipelov, H.~Wang, and H.~Xu.
\newblock How bad is top-$ k $ recommendation under competing content creators?
\newblock \emph{arXiv preprint arXiv:2302.01971}, 2023.

\end{thebibliography}

\appendix 
\section{Proofs}\label{app:proofs}

\bijective*

\begin{proof}

We use $\rho^{h}, \rho^{a}$ for the \enquote{good} orderings and $\pi^h, \pi^a$ for the \enquote{bad} orderings. We will use $\rho_{[\npresent]}^a$ to denote the $\npresent$ items that the algorithm ranks first (and thus presents to the human) and $\rho_{-[\nitem-\npresent]}^a$ to denote the $\nitem-\npresent$ items that the algorithm ranks last (and fails to present to the human). 

The \enquote{good event} occurs when in one of two cases occurs: 
\begin{enumerate}
    \item The algorithm does not rank $x_1$ first but includes it in the $\npresent$ items it presents, while the human ranks item $x_1$ first ($\rho_1^a \not = x_1, x_1 \in \rho_{[\npresent]}^a, \rho_1^h = x_1$)
    \item Identical to case 1, but instead the human ranks $x_1$ in position $m\geq 2$, and the algorithm removes all of the items the human had ranked before it ($\rho^a_1 \ne x_1, x_1 \in \rho_{[\npresent]}^a, \rho_m^h = x_1, \rho_{[m-1]}^h \subseteq \rho_{-[\nitem-\npresent]}^a$)
\end{enumerate} 

Similarly, the \enquote{bad event} occurs when the following holds: the algorithm ranks $x_1$ first, but the human does not ($\pi_1^a = x_1, \pi_1^h \not = x_1$) and it is \emph{not} the case that the human ranks $x_1$ in position $m$, and the algorithm removes all of the items the human had ranked before it (\emph{not} that $\pi_1^a \in \pi_{\npresent}^a, \pi_m^h = x_1,\pi_{[m-1]}^h \subseteq \pi_{-[\nitem-\npresent]}^a$). 

\paragraph{Mapping from good events to bad events:}
For this proof, we will begin by starting start with any pair of \enquote{good event} orderings $\rho^{h}, \rho^{a}$ and show a mapping $F$ to a pair of \enquote{bad event} orderings $\pi^h, \pi^a$. 

First, we will define $\rho^a_1 = x_j$ (the item that the algorithm ranks first in the \enquote{good event}), and index $i$ such that $\rho^a_i = x_1$ the algorithm's ranking of the best item. Similarly, we will define $m, \ell$ such that $\rho^{h}_m = x_1, \rho^{h}_{\ell} = x_j$ (the index of the item $x_1, x_j$ respectively for the human in the good event). 

Note that in the \enquote{good event}, we either have $m=1$ (the human ranks the best arm $x_1$ first), or $m\geq 2$ and $\rho_{[m-1]}^h \subseteq \rho_{-[\nitem-\npresent]}^a$. Note that we cannot have $\ell=1$ (that the human ranks item $x_j$ first): If $m=1$ this is clearly impossible because we already know $x_1 \ne x_j$ is ranked first. If $m>1$, then $\ell=1$ means that $x_j = \rho^h_1$, which by the preconditions of the \enquote{good event} must mean that $x_j$ is removed by the algorithm, or $x_j \in \rho^{a}_{-[\nitem - \npresent]}$. However, since we have defined $x_j$ as the item ranked first by the algorithm ($\rho^a_1 = x_j$), this leads to a contradiction. 

Then, we define the \enquote{bad event} through the function $F$ by swapping items $x_1, x_j$ on both sides (as in best-item mapping in Definition \ref{def:bestmap}): 
$$\pi^a_1 = x_1 \quad \pi^a_i = x_j \quad \pi^h_{m} = x_j \quad \pi^h_{\ell}= x_1$$
Note that this satisfies the preconditions of the \enquote{bad event}: the algorithm ranks $x_1$ first and the human ranks $x_1$ in position $\ell$ (which by our prior reasoning, $\ell \ne 1$). Additionally, we will show that it cannot be the case that the algorithm removes all elements that the human had ranked above the best item (\emph{not} the case that $\pi_{\ell}^h = x_1$, with $\pi_{[\ell-1]}^h \subseteq \pi_{-[\nitem-\npresent]}^a$). We will assume by contradiction that the algorithm removes all items that the human ranks above $x_1$ ($\pi_{[\ell-1]}^h \subseteq \pi_{-[\nitem-\npresent]}^a$) and show that this implies a conflict in the definition of the \enquote{bad event}. 

First, we will consider the case where $x_j$ is ranked above $x_1$ by the human ($x_j \in \pi_{[\ell-1]}^h$). Assume by contradiction that the algorithm removes all items that the human ranks above $x_1$ ($\pi_{[\ell-1]}^h \subseteq \pi_{-[\nitem-\npresent]}^a$). Then, this would imply that $x_j \in \pi_{-[\nitem-\npresent]}^a$, which implies that in the \enquote{good event} $x_1 \in \rho_{-[\nitem-\npresent]}^a$ (because items $x_1, x_j$ are swapped). However, this means that in the \enquote{good event} the algorithm removes the best arm $x_1$, again disallowed by the preconditions of the \enquote{good event}.

Next, we will consider the case where item $x_j$ is not ranked above $x_1$ by the human ($x_j \not \in \pi_{[\ell-1]}^h $). This implies that $\pi_{[\ell-1]}^h$ is unaffected by the $x_j, x_1$ swapping (since we know $x_1 = \pi_{\ell}^h$ and is also not in this set). This means that in the \enquote{good event}, all elements that the human ranked above the $\ell$th position are in the set that the algorithm fails to present ($\rho_{[\ell-1]}^h \subseteq \rho_{-[\nitem - \npresent]}^a$, with $\rho_{\ell}^h = x_j$. We know that in the \enquote{good event} the algorithm always presents item $x_j$ because by definition $\rho_1^a = x_j$. However, this implies that the human would pick $x_j$ rather than $x_1$, which violates the preconditions of the \enquote{good event}.

\paragraph{Mapping from bad events to good events:}
We will show that this mapping is a bijection by showing that there exists an inverse i.e. a function $G$ mapping from each \enquote{bad event} $\pi^a, \pi^h$ to a \enquote{good event} $\rho^a, \rho^h$. Again, this mapping will involve swapping the items $x_1, x_j$. Given $\pi^a, \pi^h$ satisfying the preconditions of the \enquote{bad event}, we know that $\pi^a_1 = x_1$ and $\pi_{\ell}^h = x_1$ for some $\ell$. Defining $x_j$ will be slightly more subtle. 

Case 1: First, we will consider the case where the human's first item is presented by the algorithm ($\pi_1^h \in \pi^a_{[\npresent]}$). Then, we will define $x_j = \pi_1^a$. Having defined $x_j$, we will additionally define the index where the algorithm ranks item $x_j$ as $i$ such that $\pi_i^a = x_j$. Then, we will construct the \enquote{good event} by swapping elements $x_1, x_j$. This satisfies the precondition of the \enquote{good event}: $\rho_1^a = x_j \ne x_1$ and by assumption of Case 1, $x_j = \rho^h_1 \in \pi^a_{[\npresent]}$, which implies $x_1 \in \rho^a_{[\npresent]}$. Additionally, by construction the human ranks the best item first $\rho^h_1 = x_1$. 

Case 2: Next, we will consider the case where the human's first item is \emph{not} presented by the algorithm ($\pi_1^h \not \in \pi^a_{[\npresent]}$). Then, we will define $x_j = \pi_m^h$ as lowest $m$ such that the algorithm presents ($\pi_m^h \in \pi^a_{[\npresent]}$). We will again define $i$ such that $\pi_i^a = x_j$. Again, we will construct the \enquote{good event} by swapping elements $x_1, x_j$. By similar reasoning, this satisfies the preconditions of the \enquote{good event}. First, by the definition of the bad event, we have $\pi^a_1 = x_1$, which means that after applying $G$ we must have $\rho_1^a = x_j \ne x_1$ and $\rho_i^a = x_1$. 
We must have that the algorithm presents the best item ($x_1 \in \rho^a_{[\npresent]}$): if this fails to hold, then we know that in the \enquote{bad event} the algorithm fails to present item $x_j$. However, by assumption of how we defined $x_j$, we required that it was the human's highest-ranked item that the algorithm presented. 

Next, we will analyze the human's ordering. In this case, by assumption $x_j = \pi^h_m$, with all lower indexed items being removed by the algorithm ( $\pi^h_{[m-1]} \subseteq \pi^a_{-[\nitem-\npresent]}$). This implies that $x_1 = \rho^h_m$, again with all lower indexed items being removed by the algorithm ($\rho^h_{[m-1]} \subseteq \rho^a_{-[\nitem-\npresent]}$). Because we have already shown that the algorithm must present the best arm $x_1$, this shows that the joint human-algorithm system must pick the best arm, when the algorithm alone would not (satisfying the conditions of the \enquote{good event}).

Finally, we can note explicitly that $F$ and $G$ are inverses of each other. Function $F$ takes good events $\rho^a, \rho^h$ and swaps the indices of items $x_j = \rho^a_1$ and $x_1$ to produce bad events $\pi^h, \pi^a$ such that $\pi^a_1 = x_1$. Function $G$ similarly swaps two items $\tilde x_j$ and $x_1$, where we will show that $\tilde x_j = x_j$ (the pairs of items swapped are identical). 
\begin{itemize}
    \item If the human's first element is presented by the algorithm in the bad event ($\pi^h_1 \in \pi^a_{[\npresent]}$), we define $\tilde x_j = \pi_1^a$ (the algorithm's highest ranked item). Note that $F$ results in moving $x_j$ to index 1 for the algorithm, and so $\tilde x_j = x_j$ in this case. 
    \item If the human's first element is not presented by the algorithm, then $G$ defines $\tilde x_j$ as the lowest-index item in $\pi^h$ such that the algorithm presents it ($\tilde x_j \in \pi^a_{[\npresent]}$). By the definition of the bad event, we know that the algorithm ranked the best item first ($\rho^a_1=x_1$) and thus after $G$ is applied we must have $\tilde x_j = x_j$, as desired. 
\end{itemize}
Having constructed a map $F$ from \enquote{good events} to \enquote{bad events} and its inverse $G$ from \enquote{bad events} to \enquote{good events}, we have created a bijection between the space of \enquote{good events} and \enquote{bad events}. 
\end{proof}

\unanchgood*

\begin{proof}
In order to prove this result, we will use the bijective best-item mapping process of flipping the indices of items $x_1, x_j = \rho^a_1$. We will show that this swapping maps from \enquote{good events} to \enquote{bad events} in a way that \emph{weakly increases} the total number of inversions. This means that, for every good event, there is exactly one bad event that is \emph{equally likely or less likely}. Additionally, we show that there exists at least \enquote{good event} for which this mapping strictly increases the total number of inversions, which implies that the the total good events are more likely to occur than the total set of bad events. As compared to Theorem \ref{thrm:anchbad} in this proof we will assume that the algorithm presents exactly 2 items to the human ($\npresent = 2$), which will be a necessary condition for our analysis. 

\paragraph{Algorithmic mapping: }
First, we will consider the algorithm's good ordering $\rho^a$ and compare it to the number of inversions in the bad ordering $\pi^a$. Recall that in the \enquote{good event}, we require that $\rho^a_1 = x_j \ne x_1$ (the algorithm does not rank the best item first) and $x_1 \in \rho^a_{[\npresent]}$ (the algorithm includes the best arm in the top $\npresent$ presented). If $\npresent=2$, this exactly requires that $\rho^a_2 = x_1$ (the algorithm ranks the best item second). In this case, the swap mapping flips the adjacent items $x_j, x_1$, which results in an increase of exactly one inversion. This means that the best-item mapping process makes the algorithm's ordering exactly one inversion \emph{more likely}. We will show that the human's ranking is at least 1 swap \emph{less} likely in order to counteract this. 

\paragraph{Human mapping: }
Next, we will consider the human's ordering in the good event $\rho^h$ and its corresponding bad ordering in the bad event $\pi^h$, which is again constructed by swapping elements $x_1, x_j$. In the \enquote{good event}, it is either the case that a) $\rho_1^h = x_1$ (the human ranks the best arm first) or b) $\rho_m^h = x_1, \rho^h_{[m-1]} \subseteq \rho^a_{-[\nitem - \npresent]}$ (the algorithm removes every element that the human had ranked above the best arm). 

First, we will consider case a). We will define $\ell$ such that $\rho_{\ell}^h = x_j$. We can consider flipping the indices of $x_1, x_j$ in two stages. First, we move $x_1$ from position 1 to position $\ell$. This takes $\ell-1$ swaps and will result in $\pi^h_{\ell-1} = x_j, \pi^h_{\ell} = x_1$. Again, because $x_1$ is the highest ranked item, each of these swaps adds an inversion, making the arrangement \emph{less} likely because it is moving $x_1$ further from its true position of $1$. Next, we move arm $x_j$ from position $\ell-1$ to position $1$. This involves $\ell-2$ swaps. Because we do not know the value of $x_j$ and the relative value of items in between them, we do not know exactly how many inversions this results in. These all \emph{may} make the arrangement more likely: for example, if $x_j = x_2$ and $\ell>2$. However, in this case, the increase in inversions from the first step ($\ell-1$) is at least one more than the decrease in inversions from the second step (no more than $\ell-2$), so the combined swap process still makes the overall setting at least 1 inversion more likely. 

Next, we will consider case b) for the human's ordering, where $\rho^h_m = x_1$, with $\rho^h_{[m-1]} \subseteq \rho^a_{-[\nitem-\npresent]}$ (the algorithm removes all items that the human had ranked before the best items). Because $\rho^a_1 = x_j$ by definition, we know that the algorithm presents item $x_j$, and therefore $\rho_{\ell}^a = x_j$ for some $\ell > m$. By a similar reasoning, we can show that flipping $x_j, x_1$ makes the arrangement at least one inversion more likely. Again, we will consider this process in two phases. First, we move item $x_1$ from position $m$ to position $\ell$. This takes $\ell-m$ swaps and results in $\pi_{\ell-1}^h = x_j, \pi_{\ell}^h = x_1$. Each of these swaps adds an inversion, because $x_1$ is the highest value item, which is being moved further from its true position in rank 1. Next, we move arm $x_j$ from position $\ell-1$ to position $m$. This involves $\ell-m-1$ swaps. Again, these swaps may reduce the number of inversions by at most $\ell-m-1$ (depending on the value of $x_j$). However, the total number of increases in inversions ($\ell-m$) is at least more than the maximum number of decreases in inversions ($\ell-m-1$). 

\paragraph{Strict increase: }
So far, we have shown that the best-item mapping from good events $\rho^a,\rho^h$ to bad events $\pi^a, \pi^y$ weakly increases the number of swaps (weakly reducing the probability of it occuring). Next, we will show that there exists at least one setting where this mapping is strict, which would imply that the \enquote{good events} have total probability that is strictly greater than the \enquote{bad events}. 

We construct this case as follows: we define $x_j = x_{\nitem} = \rho^a_1$ (the lowest-ranked item) and assume that the human ranks the best item first and the worst item last: $\rho^h_1 = x_1, \rho^h_{\nitem} = x_{\nitem}$. Because $\npresent=2$ items are presented, we know that $x_1 = \rho^a_2$. In other words, 
$$\rho^a = [x_{\nitem}, x_1, \ldots ] \quad \rho^h = [x_1, \ldots x_{\nitem}]$$
Following the swap mapping process gives us that: $\pi^a_1 = x_1, \pi^a_2 = x_{\nitem}, \pi^h_1 = x_{\nitem}, \pi^h_{\nitem} = x_1$, or: 
$$\pi^a = [x_1, x_{\nitem}, \ldots ] \quad \pi^h = [x_{\nitem}, \ldots x_1]$$
Moving from $\rho^a$ to $\pi^a$ involves adding exactly 1 inversion (making this ordering slightly more likely). Moving from $\rho^h$ to $\pi^h$ involves first moving $x_1$ from position 1 to position $\nitem$ ($\nitem-1$ swaps, each of which are an inversion and so make the arrangement less likely), and then moving $x_{\nitem}$ from position $\nitem-1$ to position $1$ ($\nitem-2$ swaps, again each of which are an inversion and thus make the arrangement less likely). In total, this involves $2 \cd \nitem-3$ inversions. This is greater than the $1$ swap involved in moving from $\rho^a$ to $\pi^a$ whenever $2 \cd \nitem -3 >1$, which occurs exactly whenever $\nitem > 2$, the minimal assumption we require. 
\end{proof}

\humregion*
\begin{proof}
First, we need to obtain closed-form solutions for the accuracy of the algorithm and human alone. The human acting alone picks the best arm whenever it arrives at the permutation $[x_1, x_2, x_3]$ (0 inversions) or $[x_1, x_3, x_2]$ (1 inversion). The probability of this occuring is given by: 
$$P_h(\phi_h) = \frac{1 + \exp(-\phi_h)}{Z_h} $$
$$Z_h = 1 + 2 \exp(-\phi_h) + 2 \exp(-2 \phi_h) + \exp(-3 \cd \phi_h)$$
where $Z_h$ is the normalizing constant giving the probability of every possible permutation of $\nitem=3$ items. The probability of the algorithm picking the best arm is identically given by: 
$$P_a(\phi_a) = \frac{1 + \exp(-\phi_a)}{Z_h} $$
$$Z_a = 1 + 2 \exp(-\phi_a) + 2 \exp(-2 \phi_a) + \exp(-3 \cd \phi_a)$$
The conditions for when the joint system picks the best arm is more complex, but can be calculated by enumerating the permutations that lead to complementarity and the number of inversions involved in each. The probability of this event occurring is given by $P_c(\phi_a, \phi_h)$ equal to $\frac{1}{Z_a \cd Z_h}$ multiplied by the quantity below:
\ifarxiv
$$ 1 + \exp(-\phi_h) + \exp(-2 \phi_h) +2 \exp(-\phi_a) + \exp(-2\phi_a) + \exp(-\phi_a - 2 \phi_h)  +3 \exp(-\phi_a - \phi_h) + 2 \exp(-2 \phi_a - \phi_h)$$
\else 
$$ 1 + \exp(-\phi_h) + \exp(-2 \phi_h) +$$ 
$$2 \exp(-\phi_a) + \exp(-2\phi_a) + \exp(-\phi_a - 2 \phi_h)$$
$$ +3 \exp(-\phi_a - \phi_h) + 2 \exp(-2 \phi_a - \phi_h)$$
\fi
The difference between the accuracy of the joint system and the human alone is given by $P_c(\phi_a, \phi_h) - P_h(\phi_h)$: 
$$ = \frac{e^{\text{$\phi $h}} \left(e^{\text{$\phi $a}+\text{$\phi $h}}+e^{2 \text{$\phi $a}}-e^{\text{$\phi $h}}-e^{2 \text{$\phi $h}}\right)}{\left(e^{\text{$\phi $a}}+e^{2 \text{$\phi $a}}+1\right) \left(e^{\text{$\phi $h}}+1\right) \left(e^{\text{$\phi $h}}+e^{2 \text{$\phi $h}}+1\right)}$$
which is positive whenever: 
\begin{equation}\label{eq:humbetter}
e^{\text{$\phi $a}+\text{$\phi $h}}+e^{2 \text{$\phi $a}}-e^{\text{$\phi $h}}-e^{2 \text{$\phi $h}}>0
\end{equation}
The joint system has higher accuracy than the algorithm alone whenever $P_c(\phi_a, \phi_h) - P_h(\phi_a) $ is positive: 
$$= -\frac{e^{\text{$\phi $a}} \left(e^{\text{$\phi $a}+\text{$\phi $h}}+e^{\text{$\phi $a}+2 \text{$\phi $h}}+e^{\text{$\phi $a}}-2 e^{2 \text{$\phi $h}}-e^{3 \text{$\phi $h}}\right)}{\left(e^{\text{$\phi $a}}+e^{2 \text{$\phi $a}}+1\right) \left(2 e^{\text{$\phi $h}}+2 e^{2 \text{$\phi $h}}+e^{3 \text{$\phi $h}}+1\right)}$$
which is positive whenever: 
\begin{equation}\label{eq:algbetter}
-e^{\phi_a + \phi_h} - e^{\phi_a +2 \phi_h} - e^{\phi_a} + 2 e^{2 \phi_h} + e^{3 \phi_h}>0
\end{equation}
Note that Equations \ref{eq:humbetter}, \ref{eq:algbetter} are \emph{not} symmetric because the roles of the human and algorithm are not symmetric. 

We will constructively produce a function describing relevant $\phi_h$ values and then prove that the joint system always has higher accuracy than either the human or algorithm alone. Specifically, we note that we can rewrite the preconditions in the statement of this lemma as: 
\begin{equation}\label{eq:cases}
\begin{cases}
    \phi_h \in (\phi_a, \phi_a \cd 1.3] \quad \phi_h \leq 1\\
    \phi_h \in (\phi_a, \phi_a + 0.3] \quad \phi_h > 1
\end{cases}
\end{equation}
The cases in Equation \ref{eq:cases} are what we will show to be sufficient conditions for complementarity to exist.

First, we can observe that whenever $\phi_h > \phi_a$, Equation \ref{eq:algbetter} will be satisfied, because this implies the following three inequalities hold: 
$$\exp(2 \phi_h) > \exp(\phi_a + \phi_h) \quad \exp(2 \phi_h) > \exp(\phi_a) $$
$$\exp(3 \phi_h) > \exp(\phi_a + 2 \phi_h)$$
This means that the joint system will always outperform the algorithm alone. 
Next, the remaining task is to find conditions where Equation \ref{eq:humbetter} is satisfied (the joint system outperforms the human alone). First, we will note that Equation \ref{eq:humbetter} is decreasing in $\phi_h$ when $\phi_h > \phi_a$: 
$$\frac{d}{d\phi_h}[\exp(\phi_a + \phi_h) + \exp(2 \phi_a) - \exp(\phi_h) - \exp(2 \phi_h)]$$
$$= \exp(\phi_a + \phi_h)  - \exp(\phi_h)-2 \exp(2\phi_h) < 0$$
Therefore, if we wish to show that Equation \ref{eq:humbetter} is positive, it suffices to show it for the maximum value of $\phi_h$ we allow. 

\textbf{Low accuracy: $\phi_a \leq 1$: }
First, we will consider the case where $\phi_a\leq 1$, which by Equation \ref{eq:cases} we will set $\phi_h \in (\phi_a, 1.3 \cd \phi_a]$. Setting $\phi_h$ to its maximum value in this range turns Equation \ref{eq:humbetter} into: 
$$\exp(2 \cd \phi_a) +\exp(2.3 \cd \phi_a) - \exp(2.6 \cd \phi_a)-\exp(1.3 \cd \phi_a)$$
where our goal is to show that this term is always positive within its range of $\phi_a \in [0, 1]$. We can show this by inspection: this term is positive at its endpoints ($\phi_a =0, \phi_a = 1$) and has exactly one point of zero derivative, where it is also positive. 

\textbf{High accuracy: $\phi_a > 1$: }
Next, we will consider the case where $\phi_a> 1$, which by Equation \ref{eq:cases} we will set $\phi_h \in (\phi_a, \phi_a + 0.3]$. Again, it suffices to show that we get complementarity in the case with $\phi_h$ set to its maximum value of $\phi_a + 0.3$. Having this substitution turns Equation \ref{eq:humbetter} gives:
$$\exp(2\cd \phi_a + 0.3) + \exp(2 \cd \phi_a) - \exp(\phi_a + 0.3) - \exp(2 \cd \phi_a + 0.6) > 0$$
Pulling out common terms gives: 
$$\exp(\phi_a) \cd (-\exp(0.3) + \exp(\phi_a) \cd (\exp(0.3) + 1 +\exp(0.6))$$
This term is increasing in $\phi_a$ and is positive for $\phi_a = 1$, again showing that the the overall term is always positive. 

This again shows that Equation \ref{eq:humbetter} is satisfied in these conditions, meaning that the joint system has higher accuracy than either the human alone or algorithm alone. 
\end{proof}

\algregion*

\begin{proof}
First, we will show that there does exist a (narrow) zone of complementarity where the algorithm is slightly more accurate than the human, but still strictly benefits from collaborating with it. We will set $\phi_a \in [\phi, \phi_h \cd 1.1]$ for $\phi_h \leq 1$ and use the functional form for when complementarity is achieved from Lemma \ref{lem:humregion} Substituting in to Equation \ref{eq:algbetter} gives that complementarity occurs whenever the below term is positive: 
$$2 \cd \exp(2 \phi_h) + \exp(3 \cd \phi_h) - \exp(1.1 \cd \phi_h) - \exp(2.1 \phi_h) - \exp(3.1 \cd \phi_h)$$
This is positive whenever $\phi_h \leq 1$, indicating higher accuracy than the algorithm alone. Because $\phi_a>\phi_h$, this also shows greater accuracy than the human alone, indicating complementarity. 

Next, we will show that the region of complementarity is narrow and asymmetric. Specifically, we will show that for any $\phi_a \geq \phi_h + 0.15$ for $\phi_h \geq 1$ fails to lead to complementarity. Note that if the values of $\phi_a, \phi_h$ were reversed, this would fall within the region of complementarity from Lemma \ref{lem:humregion}.

We can use the functional form derived in Lemma \ref{lem:humregion}: Substituting in to Equation \ref{eq:algbetter} gives: 
$$\exp(2\phi_a + 0.15) + \exp(2 \phi_a) - \exp(\phi_a + 0.15) - \exp(2 \phi_a + 0.3)$$
Simplifying gives: 
$$\exp(\phi_a) \cd (-\exp(0.15) + \exp(\phi_a)(1 + \exp(0.15) + \exp(0.3))$$
which is negative so long as $\phi_a \geq 1$, as desired. 
\end{proof}

\orderhuman*

\begin{proof}
We can prove this by inspecting 
$$P_c(\phi_a, \phi_h) - P_c(\phi_h, \phi_a)$$
Dropping the common denominator of $Z_a \cd Z_h$, this difference simplifies to: 
$$\exp(-2(\phi_a + \phi_h)) \cd (-\exp(\phi_a) + \exp(\phi_h) - \exp(2 \phi_a + \phi_h) + \exp(\phi_a + 2 \phi_h))$$
This term is positive exactly whenever $\phi_h > \phi_a$, which means that if one actor has has higher accuracy, the joint system has accuracy that is optimized by placing them second (e.g. in the human's role). 
\end{proof}

\anchbad*

\begin{proof}
In order to prove this result, we will use the best-item mapping definition from Definition \ref{def:bestmap}. Specifically, we will show that this mapping maps connects any \enquote{good event} to a corresponding \enquote{bad event} in a way that strictly \emph{decreases} the total number of inversions between the human and algorithm's rankings. Because under the Mallows model, events are \emph{more} likely if they involve \emph{fewer} inversions, this implies that the the \enquote{good event} is strictly less likely than the corresponding \enquote{bad event}. Because Lemma \ref{lem:bijective} has shown that best-item mapping is bijective, this means that, for every good event, there is exactly one bad event that is more likely, implying that the total good events are less likely to occur than the total set of bad events. 

We will being with an arbitrary good event with algorithmic and human permutation $\rho^a, \rho^h$ respectively. We will define $x_j = \rho^a_1$, and $i$ such that $\rho^a_i = x_1$. By Definition \ref{def:bestmap} best-item mapping works by flipping the index of $x_1, x_j$ for both the human and algorithm. We will refer to flipping the ordering of a pair of \emph{adjacent} items $\rho_i, \rho_{i+1}$ as a single \enquote{swap}. This may increase or decrease the number of \emph{inversions} depending on whether or not $\rho_i > \rho_{i+1}$ (the item ranked in the $i$th place has higher true value than item ranked $i+1$th).  

First, we will analyze the algorithm's ordering $\rho^a$ and will show that swapping $x_1, x_j$ to obtain $\pi^a$ only decreases the number of inversions, as compared with $\rho^a$. To bring arm $x_1$ from index $i$ to $1$ involves $i-1$ swaps. Because $x_1$ is the highest value item, all of these swaps must reduce the number of inversions, which makes the arrangement more likely. At the end of this process, arm $x_1$ will be ranked first and $x_j$ will be in ranked second. Next, to bring arm $x_j$ to position $i$ will involve $i-2$ swaps. How many inversions this involves depends on the value of $x_j$. In the worst case, these could all make the arrangement less likely - for example, if $x_j = x_2$ the second highest ranked item and $i >2$. This leads to an upper bound of $i-2$ swaps that could increase the total number of inversions. However, we know that the total number of swaps that reduce inversions ($i-1$) is greater than the total maximum number of swaps that increase inversions ($i-2$), so the entire process makes $\pi^a$ the \enquote{bad event} more likely than $\rho^{a}$ the \enquote{good event}. 

Next, we will analyze the human's ordering $\pi^h$, which is again constructed by swapping elements $x_1, x_j$. In the anchored ordering, the human's ordering is \enquote{anchored} at the algorithm's permutation. This means that inversions are determined by the algorithm's realized ordering - for example, if the algorithm presents $[x_2, x_1, x_3]$ then the human would consider $x_2 >x_1$. Because arms $x_1, x_j$ have been swapped for both the human and the algorithm, this means that (from the human's perspective), this is equivalent to simply re-labeling arms $x_1, x_j$. This exactly preserves the total number of inversions, which means that $\rho^h, \pi^h$ are equally likely to occur. 

Taken together, we have constructed a bijection between the \enquote{good} and \enquote{bad} events, showing that for each good event, there exists a bad event that is strictly more likely, so the probability of picking the best item is strictly less likely in the anchored setting. 
\end{proof}

\end{document}